\documentclass[twoside]{article}

\usepackage[preprint]{aistats2027}
\usepackage[round]{natbib}

\usepackage{color,xcolor}

\usepackage{amssymb,amsmath,amsthm,amsfonts}
\usepackage{mathtools}
\usepackage{booktabs}
\usepackage{bbm}
\usepackage{enumitem}
\usepackage{comment}

\newcommand{\one}{\mathbbm{1}}
\newcommand{\E}{\operatorname{\mathbbm{E}}}
\newcommand{\Q}{\operatorname{\mathbbm{P}}}
\newcommand{\R}{\mathbb{R}}

\newcommand{\Ii}{\mathcal{I}}

\newcommand{\Prob}{\mathbb{P}}
\newcommand{\Ftilde}{\widetilde{F}}

\newcommand{\Unif}{\mathrm{Unif}}

\theoremstyle{plain}
\newtheorem{proposition}{Proposition}[section]
\theoremstyle{definition}
\newtheorem{definition}{Definition}[section]
\theoremstyle{remark}
\newtheorem{remark}{Remark}[section]

\begin{document}

% If your paper is accepted and the title of your paper is very long,
% the style will print as headings an error message. Use the following
% command to supply a shorter title of your paper so that it can be
% used as headings.
%
\runningtitle{Outcome-Conditional Recalibration}

% If your paper is accepted and the number of authors is large, the
% style will print as headings an error message. Use the following
% command to supply a shorter version of the author names so that
% they can be used as headings (for example, use only the surnames)
%
%\runningauthor{Surname 1, Surname 2, Surname 3, ...., Surname n}

\twocolumn[

\aistatstitle{Towards Calibrated Probabilistic Forecasts for Events of Interest via Outcome-Conditional Recalibration}
% Local recalibration: calibrated predictions for many outcomes
% Towards reliable intervals for high impact events via local recalibration
%Beyond Marginal Calibration: Recalibrating Probabilistic Forecasts on Outcome Subsets% Calibrated Forecasts for Events of Interest via Local Recalibration
% Calibrating Probabilistic Forecasts Across Outcome Regions
% Recalibrating Probabilistic Forecasts for Events of Interest
% Outcome-Partitioned Recalibration of Probabilistic Forecasts
% Accurate Uncertainties for Critical Events via Local Recalibration
% Towards Calibrated Probabilistic Forecasts for Critical Outcomes
% Accurate Uncertainties Across the Outcome Space via Local Recalibration
% Accurate Probabilistic Forecasts for Events of Interest via Local Recalibration
%Towards calibrated predictions for many outcomes via local recalibration
% Local recalibration: Calibrated predictive uncertainty in regions of interest

\aistatsauthor{Jakob Wessel \And Sam Allen}

\aistatsaddress{Institute for Meteorology\\ University of Leipzig \And Institute of Statistics\\ Karlsruhe Institute of Technology }]

\begin{abstract}
Calibration is an essential requirement for probabilistic predictions to be useful for decision making. While state-of-the-art prediction methods often yield miscalibrated predictive distributions, several post-hoc recalibration schemes have been proposed to generate calibrated predictions. However, popular recalibration schemes can conceal miscalibration in specific regions of the outcome space. Since particular outcomes, such as extreme events, often matter most for decision making, probabilistic predictions should be calibrated when evaluation is restricted to these outcomes. Hence, in this paper, we introduce \emph{outcome-conditional recalibration}, a post-hoc method to recalibrate probabilistic predictions on user-defined regions of the outcome space. The method is simple, easy to implement, and can be applied to arbitrary predictive distributions. It works by applying the quantile recalibration approach of \cite{kuleshov_accurate_2018} to forecast conditional distributions, before rescaling these conditional distributions so that forecast event probabilities match empirical occurrence frequencies. This produces valid and continuous predictive distributions that are calibrated within each region of interest. Across regression benchmarks, we demonstrate that existing recalibration schemes do not necessarily yield calibrated predictions when interest is on particular outcomes, and that our approach improves outcome-conditional calibration relative to existing conditional and unconditional recalibration methods, while retaining competitive calibration overall. In an application to day-ahead electricity price forecasting, the approach substantially improves calibration when predicting negative prices, at negligible cost to forecast accuracy.
\end{abstract}

\section{Introduction}

% Why should probabilistic forecasts be calibrated? 

%Probabilistic predictions quantify the uncertainty in the unknown outcome variable, providing invaluable information for decision making.
Probabilistic predictions are said to be \emph{calibrated} if they accurately quantify the uncertainty in the unknown outcome variable, in the sense that predicted probabilities align statistically with corresponding outcomes. Calibration is considered a fundamental property that forecasts should satisfy if they are to be useful for decision making \citep{gneiting_probabilistic_2007}. For example, if an event is predicted to occur with a certain probability, then the prediction is calibrated if the event does indeed occur with this probability. Here, we focus on probabilistic predictions in a regression context, where predictions take the form of a cumulative distribution function. While many forms of calibration exist in this case \citep{gneiting_regression_2023,ResinEtAl2026}, the most popular notion of calibration corresponds to all prediction intervals derived from the predictive distribution having the correct unconditional coverage \citep{Dawid1984,DieboldEtAl1998}.

% How can we get calibrated forecasts?

\cite{GuoEtAl2017} demonstrate that state-of-the-art prediction methods often do not issue calibrated predictions. Many approaches have thus been proposed to recalibrate predictions for real-valued outcomes. \cite{VovkEtAl2017} introduced conformal predictive systems, which yield sets of predictive distributions that are guaranteed to contain a calibrated prediction out-of-sample \citep[see also][]{ChernozhukovEtAl2021}. %These have since been extended to satisfy stronger conditional notions of calibration \citep{BostromEtAl2021,allen_-sample_2025}.
Several other studies have proposed weakly enforcing calibration during model training by quantifying miscalibration and using this as a regularisation term within the loss function \citep{wilks_enforcing_2018,UtpalaRai2020,LaajilEtAl2026}. Alternatively, post-hoc recalibration schemes learn the miscalibration in predictive distributions, and then transform or post-process the predictions so that this miscalibration is removed \citep{kuleshov_accurate_2018,DheurBenTaieb2024}.

% Why do we want local calibration?

However, these recalibration methods typically yield predictive distributions that only satisfy a weak unconditional notion of calibration. It is well-known that unconditional calibration is generally insufficient for effective decision making, since it only requires that predicted probabilities align with true event frequencies on average, and does not ensure that individual predictions are reliable \citep{Christoffersen1998,RomanoEtAl2019,song_distribution_2019}. %That is, biases can cancel out when averaged across different prediction cases. Biases can also cancel out when predicting different outcomes; that is, 
An unconditionally calibrated predictive distribution may also not be calibrated when attention is restricted to a particular subset of the outcome space \citep{allen_tail_2025}. Standard recalibration methods may therefore yield forecasts that are miscalibrated when attention is restricted to certain outcomes. However, certain outcomes, such as extreme events, are generally more relevant for decision making, making calibrated predictions for these outcomes particularly valuable.

% We use methods to get calibrated forecasts to get local calibration

We therefore introduce a post-hoc \emph{outcome-conditional} recalibration scheme that applies the quantile recalibration method of \cite{kuleshov_accurate_2018} to the conditional forecast distribution given that the outcome falls in a particular subset of the outcome space. By applying this repeatedly to disjoint subsets that partition the real line, and rescaling so that forecast event probabilities match empirical occurrence frequencies, the resulting recalibrated predictive distributions are valid and continuous distribution functions that satisfy outcome-conditional calibration by construction. We apply the approach to benchmark regression datasets, and find that the standard recalibration scheme of \cite{kuleshov_accurate_2018} does not necessarily issue calibrated predictions for extreme events; nor do variants designed to satisfy stronger conditional notions of calibration \citep{kuleshov_calibrated_2022}. In contrast, outcome-conditional recalibration yields well-calibrated predictions for extreme events, whilst also exhibiting competitive calibration overall. In an application to energy price forecasting, outcome-conditional calibration is capable of issuing reliable predictions of negative electricity prices, whereas alternative recalibration methods are not.

\section{Preliminaries}

\subsection{Notation}\label{sec:notation}

Let $Y \in \R$ be a scalar outcome. We denote a \emph{probabilistic prediction} for $Y$ by a cumulative distribution function $F : \R \to [0,1]$. We treat $F$ as a random variable; the randomness in $F$ often comes from dependence on random covariates, which we omit from the notation for simplicity. The joint distribution of $(F, Y)$ is denoted by $\Prob$.  We assume that this joint distribution is continuous, though this can be relaxed via suitable randomisation \citep{GneitingRanjan2013}. We refer to $F(Y)$ as the forecast \emph{probability integral transform (PIT)}. The conditional distribution of $F$ on an interval $\Ii = [a, b) \subset \R$ is defined as 
\[
F^{\Ii}(x) := \frac{F(x) - F(a)}{F(b) - F(a)}, \quad \text{for $x \in [a, b)$,}
\] 
with $F^{\Ii}(x) = 0$ for $x < a$, and $F^{\Ii}(x) = 1$ for $x \ge b$. The \emph{conditional PIT (CPIT)} on $\Ii$ is defined as $F^\Ii(Y)$. 

In practice, we observe a sample of forecast--observation pairs $\{(F_i, y_i)\}_{i=1}^{N}$, which can be interpreted as realisations of the random pair $(F, Y)$. The pairs $\{(F_i, y_i)\}_{i=1}^{N}$ can either correspond to the same data used to train the forecasts, or (more commonly) to an out-of-sample calibration dataset. Conditional forecast distributions on $\Ii$ are denoted by $\{F_i^\Ii \}_{i=1}^N$.

\subsection{Related Work}

The basis of our outcome-conditional recalibration scheme is the widely used \emph{quantile recalibration} method of \cite{kuleshov_accurate_2018}. Several studies have proposed extensions or adaptations of this PIT-based recalibration. \cite{DheurBenTaieb2024} suggest to directly learn the optimal quantile recalibrated predictive distribution, rather than applying the recalibration post-hoc; that is, the recalibration is included as a part of the model, and model parameters are then learned subject to this calibration constraint. \cite{wilks_enforcing_2018} instead adapt the loss function used to train probabilistic models by adding a regularisation term that quantifies the divergence between the PIT values $\{F_i(y_i)\}_{i=1}^N$ and a standard uniform distribution, thus penalising models that produce miscalibrated predictive distributions during training \citep[see also][]{UtpalaRai2020}. \cite{LaajilEtAl2026} employ a similar approach to yield predictive distributions for multivariate outcomes that satisfy a multivariate notion of calibration \citep{gneiting_assessing_2008,AllenEtAl2024}. 

Other studies have also proposed extensions of quantile recalibration to multivariate predictive distributions. \cite{ChungEtAl2024} use the estimated predictive density function to transform multivariate predictive distributions to univariate predictive distributions, before applying quantile recalibration. This ensures that high-density regions of the multivariate predictive distribution are calibrated. \cite{DheurBenTaieb2026} similarly propose recalibrating latent variables within generative models using quantile recalibration. Feeding these recalibrated latent variables into the estimated model then yields calibrated multivariate predictions. %This is only guaranteed to satisfy calibration of the norm in the latent space, but it at least shows an example of how PIT recalibration is being extended to the multivariate case. 
\cite{KockEtAl2026} instead directly estimate the conditional distribution of the PIT values in all dimensions. Quantile recalibration is then applied along each margin, and sampling from these recalibrated distributions at observed PIT vectors yields a sample from a calibrated multivariate predictive distribution. 

This latter approach is performed conditionally on covariates, and therefore approximates a stronger notion of calibration. \cite{song_distribution_2019} and \cite{kuleshov_calibrated_2022}, among others, also propose conditional recalibration schemes whereby the recalibration is performed conditionally on the original predictive distributions, thus making the predictions calibrated conditionally on the forecasts themselves; this is often referred to as \emph{auto-calibration} \citep{Tsyplakov2011,GneitingRanjan2013} or \emph{distribution calibration} \citep{song_distribution_2019}. However, these approaches typically require making fairly strong modelling assumptions, and therefore do not satisfy any finite-sample calibration guarantees. In Section \ref{sec:exp}, we use the approach of \cite{kuleshov_calibrated_2022} as a baseline method against which outcome-conditional recalibration is compared.

However, none of these studies have considered calibration conditional on the observed outcomes. \cite{wessel_enforcing_2026} alternatively quantify tail miscalibration, and use this to regularise loss functions during model training. This is found to yield more reliable forecasts for extreme outcomes. However, this only weakly enforces tail calibration. The recalibration scheme proposed in this paper instead provides a means to enforce outcome-conditional calibration of predictive distributions post-hoc, leading to predictive distributions that satisfy theoretical in-sample calibration guarantees.

\section{Calibration}\label{sec:cal}

The calibration of a probabilistic prediction is typically assessed by checking whether the forecast probability integral transform (PIT), $F(Y)$, follows a standard uniform distribution \citep{Dawid1984,DieboldEtAl1998}. This implies that prediction intervals derived from the predictive distribution have the correct unconditional coverage at all nominal levels. 

\begin{definition}[Probabilistic calibration]
	\label{def:pit_cal_pop}
	The prediction $F$ is \emph{probabilistically calibrated} for $Y$ if $F(Y) \sim \Unif(0, 1)$. That is, $\Prob(F(Y) \le u) = u$ for all $u \in [0,1]$.
\end{definition}

Probabilistic calibration is also often referred to as \emph{quantile calibration} \citep{song_distribution_2019}, though we adopt the terminology introduced by \cite{gneiting_probabilistic_2007}. \cite{allen_tail_2025} introduced a generalised notion of probabilistic calibration that focuses on tails of predictive distributions. This definition of \emph{tail calibration} can be adapted to assess calibration when predicting outcomes in any region of the outcome space. We refer to this as \emph{outcome-conditional calibration}.

\begin{definition}[Outcome-conditional calibration]
	\label{def:tail_cal_pop}
	The prediction $F$ is \emph{outcome-conditionally calibrated} on
	$\mathcal{I} = [a,b)$ if $\Prob(F^\Ii(Y) \le u \mid Y \in \Ii) = u$ for all $u \in [0, 1]$, \emph{and} $\Prob(Y \in \Ii) = \E[F(b) - F(a)]$.
\end{definition}

The first condition in Definition \ref{def:tail_cal_pop} requires that $F^\Ii(Y) \sim \Unif[0, 1]$ when $Y \in \Ii$. That is, the conditional predictive distribution $F^\Ii$ is a probabilistically calibrated prediction for outcomes in $\Ii$. We refer to this condition as \emph{CPIT calibration}. The second condition requires that the unconditional probability that $Y \in \Ii$ is equal to the average predicted probability that $Y \in \Ii$. This assesses whether predictions are calibrated when forecasting the occurrence of outcomes in $\Ii$. We refer to this condition as \emph{occurrence calibration}.

While outcome-conditional calibration requires that the conditional predictive distribution is calibrated when forecasting $Y \in \Ii$, probabilistic calibration generally does not imply outcome-conditional calibration \citep{allen_tail_2025}. Hence, standard recalibration schemes that enforce probabilistic calibration may not yield outcome-conditionally calibrated predictive distributions. In Section \ref{sec:exp}, we demonstrate empirically that they often do not. This motivates the introduction of outcome-conditional recalibration schemes.

\begin{remark}[Conditional calibration]
	While probabilistic calibration is most commonly assessed in practice, it corresponds to a fairly weak notion of calibration that only holds on average; it does not assess whether uncertainty is accurately quantified in any single prediction. Instead, several stronger definitions of calibration have been proposed. %A probabilistic forecast is \emph{auto-calibrated} (or distribution calibrated) if $\Prob(Y \le y \mid F) = F(y)$ almost surely for all $y \in \R$ \citep{Tsyplakov2011,GneitingRanjan2013,song_distribution_2019}. 
    For example, auto-calibration (or distribution calibration) is a strong conditional notion of calibration that extends the typical notion of conditional calibration for probabilistic classifiers. If a prediction is auto-calibrated, then it is also outcome-conditionally calibrated \citep{allen_tail_2025}. Thus, one could argue that outcome-conditional recalibration is obsolete if we can achieve auto-calibration. However, while several recalibration schemes have been proposed to achieve auto-calibration \citep{song_distribution_2019,kuleshov_calibrated_2022,KockEtAl2026}, these all require making somewhat restrictive modelling assumptions. In Section \ref{sec:exp}, we demonstrate that outcome-conditional recalibration can improve both outcome-conditional and global probabilistic calibration relative to these methods.  
\end{remark}

\section{From Global to Local Recalibration}

\subsection{Quantile Recalibration}
\label{sec:global_recal}

In a regression context, the goal of \emph{post-hoc recalibration} is to learn a function $T: [0, 1] \to [0, 1]$ such that $T \circ F$ is a calibrated predictive distribution. \cite{kuleshov_accurate_2018} consider the strictly increasing function $T(u) = \Prob(F(Y) \le u)$, with inverse $T^{-1}$. If $T$ is the identity function, the prediction $F$ is probabilistically calibrated for $Y$; if not, the prediction exhibits systematic miscalibration. For any $T$ and $F$, we have 
\[
\Prob(T(F(Y)) \le u) = \Prob(F(Y) \le T^{-1}(u)) = u,
\]
for all $u \in [0, 1]$. That is, $T \circ F$ is a probabilistically calibrated forecast distribution for $Y$ by construction. 

In practice, $T$ must be estimated from the finite sample $\{(F_i, y_i)\}_{i=1}^N$. Applying the estimated function $\hat{T}$ to the forecasts yields a sequence of predictive distributions $\{\hat{T} \circ F_i\}_{i=1}^N$ that are generally calibrated in-sample, and which should exhibit good calibration out-of-sample, assuming these forecasts-observation pairs are also realisations of $(F, Y)$. Following \cite{kuleshov_accurate_2018}, we estimate $T$ using isotonic regression, though other choices also exist, such as kernel density estimation \citep{DheurBenTaieb2023}, and the empirical distribution of the PIT values $\{F_i(y_i)\}_{i=1}^N$.
 
\subsection{Outcome-conditional Recalibration}
\label{sec:tail_recal} 

An analogous approach can be used to recalibrate the predictions within a subset $\Ii$ of interest. Define $T_\Ii(u) = \Prob(F^\Ii(Y) \le u \mid Y \in \Ii)$, for $u \in [0, 1]$. %Since we assume that $F(Y)$ has a strictly increasing distribution function, it follows that $T^\Ii$ is invertible. 
The predictive distribution is CPIT calibrated on $\Ii$ if $T_\Ii$ is the identity function. In any case, we have that
\begin{align*}
&\Prob(T_\Ii(F^\Ii(Y)) \le u \mid Y \in \Ii) = \\
&\quad \Prob(F^\Ii(Y) \le T_\Ii^{-1}(u) \mid Y \in \Ii) = T_\Ii(T_\Ii^{-1}(u)) = u,
\end{align*}
for all $u \in [0, 1]$. That is, $T_\Ii \circ F^\Ii$ is always a probabilistically calibrated predictive distribution for $Y \in \Ii$. 

The predicted probability that $Y \in \Ii$ can also easily be recalibrated by rescaling $F(b) - F(a)$ by a constant so that it satisfies the second requirement of outcome-conditional calibration in Definition \ref{def:tail_cal_pop} (whilst appropriately handling the probability mass on the complement of $\Ii$). This two-step recalibration approach can therefore be used to obtain outcome-conditionally calibrated predictions on particular regions of interest. 

%$c \in (0, \infty)$, where 
%\[
%c = \frac{\Prob(Y \in \Ii)}{\E[F(b) - F(a)]}.
%\]
%By construction, $\E[c(F(b) - F(a))] = \Prob(Y \in \Ii)$, which is 

While this only recalibrates the restriction of $F$ to $\Ii$, this approach can be applied to several intervals that partition the real line, yielding a valid predictive distribution that is locally calibrated on all chosen intervals. Let $a_0 = -\infty < a_1 < \dots < a_{G-1} < a_G = +\infty$, and consider the contiguous intervals $\Ii_1 = (a_0, a_1)$, $\Ii_g = [a_{g-1}, a_g)$, for $g = 2, \dots, G$. These $G$ intervals partition the outcome space $\R$. The recalibration scheme of \cite{kuleshov_accurate_2018} can then be applied separately to each interval. 

Given the predictive distribution $F$, define the transformation
$T_{\Ii_g}(u) = \Prob(F^{\Ii_g}(Y) \le u \mid Y \in \Ii_g)$, for $u \in [0, 1]$ and $g = 1, \dots, G$, and let $c_1, \dots, c_G \in (0, \infty)$ be rescaling constants such that 
\[
\E \left[ \frac{c_j [F(a_j) - F(a_{j-1})]}{ \sum_{g=1}^G c_g [F(a_g) - F(a_{g-1})]}  \right] = \Q(Y \in \Ii_j),
\]
for $j = 1, \dots, G$. \citet[Theorem 1]{KullFlach2015} prove that such constants exist. Define $m_0 = 0$ and $m_j = c_j [F(a_j) - F(a_{j-1})]/ \sum_{g=1}^G c_g[F(a_g) - F(a_{g-1})]$ for $j = 1, \dots, G$. Then, a recalibrated version of $F$ can be defined as 
\begin{equation}\label{eq:recal_pop}
	\Ftilde(x) = \sum_{j=0}^{g-1} m_j + m_g \left[ T_{\Ii_g} \circ F^{\Ii_g}(x) \right] \quad \text{for $x \in \Ii_g$.}
\end{equation}
We term this approach \emph{outcome-conditional recalibration}. Essentially, the quantile recalibration of \cite{kuleshov_accurate_2018} is applied separately to each conditional forecast distribution $F^{\Ii_1}, \dots, F^{\Ii_G}$. These recalibrated conditional distributions are then rescaled using the weights $m_1, \dots, m_G$. These weights always sum to one, and $\tilde{F}$ is therefore a piecewise combination of the recalibrated distributions, with knots at the interval bounds. This outcome-conditionally recalibrated predictive distribution is therefore a valid and continuous distribution function that is outcome-conditionally calibrated on the intervals $\Ii_1, \dots, \Ii_G$ by construction. This is formalised in the following propositions, which are proved in Appendix \ref{app:proofs}.
%\sam{This needs amending to use the correct rescaling constants}

\begin{proposition}[Validity]
	\label{prop:proper}
	The recalibrated predictive distribution defined at \eqref{eq:recal_pop} is a valid and continuous distribution function on $\R$. In particular, $\Ftilde(-\infty) = 0$, $\Ftilde(+\infty) = 1$, and $\Ftilde$ is non-decreasing.
\end{proposition}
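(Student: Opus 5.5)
The plan is to work interval by interval and then glue the pieces at the knots. First we record three basic facts. Each $m_j \ge 0$, and $\sum_{j=1}^G m_j = 1$ by construction, since the $m_j$ are normalised by the common denominator $\sum_g c_g[F(a_g)-F(a_{g-1})]$. We assume this denominator is positive; this holds because $F$ is a distribution function and the $c_g$ are positive. Next, each map $T_{\Ii_g}:[0,1]\to[0,1]$ is a distribution function of a random variable supported on $[0,1]$. It is therefore non-decreasing with $T_{\Ii_g}(0)=0$ and $T_{\Ii_g}(1)=1$. Under the paper's continuity assumption on the joint law of $(F,Y)$ it is also continuous. Finally, each conditional forecast $F^{\Ii_g}$ is non-decreasing and continuous on $\Ii_g$, with $F^{\Ii_g}(a_{g-1})=0$ and $F^{\Ii_g}(x)\to 1$ as $x\uparrow a_g$. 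This follows from the definition and the continuity of $F$. Intervals with $F(a_g)=F(a_{g-1})$ have $m_g=0$, so they contribute nothing and can be handled separately.

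Monotonicity within each piece follows immediately. For $x\in\Ii_g$, $\Ftilde(x)$ is the constant $\sum_{j<g} m_j$ plus $m_g\ge 0$ times the composition $T_{\Ii_g}\circ F^{\Ii_g}$ of two non-decreasing continuous maps. So $\Ftilde$ is non-decreasing and continuous on the interior of each $\Ii_g$.

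The gluing step is the key one. We check that the value of $\Ftilde$ at the left endpoint $a_g$ of $\Ii_{g+1}$ equals the left limit from $\Ii_g$. From the right, $\Ftilde(a_g)=\sum_{j\le g} m_j + m_{g+1}T_{\Ii_{g+1}}(0)=\sum_{j\le g}m_j$. From the left, $\lim_{x\uparrow a_g}\Ftilde(x)=\sum_{j<g}m_j+m_g T_{\Ii_g}(1)=\sum_{j\le g}m_j$. The two agree, which gives continuity at every knot. Combined with monotonicity on each piece, this gives global monotonicity, because the value attained at the end of piece $g$ equals the value at the start of piece $g+1$.

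For the limits, note that $\Ii_1=(-\infty,a_1)$. As $x\to-\infty$ we have $F(x)\to 0=F(a_0)$, so $F^{\Ii_1}(x)\to 0$ and $\Ftilde(x)\to m_0+m_1T_{\Ii_1}(0)=0$. Similarly, as $x\to+\infty$ we have $F^{\Ii_G}(x)\to 1$, so $\Ftilde(x)\to\sum_{j=1}^{G} m_j=1$.

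The main obstacle is justifying continuity of $T_{\Ii_g}$ at the endpoints, namely $T_{\Ii_g}(0)=0$ with continuity at $0$ and left-continuity at $1$. This requires that the CPIT has no atoms, and I would derive this from the standing continuity assumption on the joint distribution. The degenerate intervals with zero forecast mass also need separate but trivial treatment.
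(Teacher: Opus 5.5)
Your proof is correct and follows essentially the same route as the paper. Both arguments use that each $T_{\Ii_g}$ and $F^{\Ii_g}$ is a continuous non-decreasing map from $0$ to $1$, obtain monotonicity and continuity within each interval, match the one-sided limits $\sum_{i\le g} m_i$ at each knot, and evaluate the limits at $\pm\infty$ as $m_1 T_{\Ii_1}(0)=0$ and $\sum_g m_g = 1$. The differences are minor: the paper gets monotonicity across intervals from the direct bound $\Ftilde(x_1)\le\sum_{i\le g_1} m_i\le \Ftilde(x_2)$ rather than from knot continuity; it also simply asserts that the CPIT has no atoms and does not treat zero-mass intervals, both of which you correctly flag as resting on the standing continuity assumption.
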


\begin{proposition}[Calibration]
	\label{prop:tail_cpit}
	The recalibrated predictive distribution defined at \eqref{eq:recal_pop} is outcome-conditionally calibrated on the intervals $\Ii_1, \dots, \Ii_G$.
\end{proposition}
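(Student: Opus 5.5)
The plan is to check the two conditions of Definition~\ref{def:tail_cal_pop} separately for each interval $\Ii_g$, using the explicit piecewise form \eqref{eq:recal_pop}. The first step is to identify the conditional distribution of $\Ftilde$ on $\Ii_g=[a_{g-1},a_g)$. From \eqref{eq:recal_pop} and continuity of $T_{\Ii_g}$ (which holds under the standing continuity assumption on $(F,Y)$, with $T_{\Ii_g}(0)=0$ and $T_{\Ii_g}(1)=1$), we get
\[
\Ftilde(a_{g-1}) = \sum_{j=0}^{g-1} m_j, \qquad \Ftilde(a_g) = \sum_{j=0}^{g} m_j .
\]
Hence $\Ftilde(a_g)-\Ftilde(a_{g-1}) = m_g$. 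For $x\in\Ii_g$ this gives
\[
\Ftilde^{\Ii_g}(x) = \frac{\Ftilde(x)-\Ftilde(a_{g-1})}{m_g} = T_{\Ii_g}\bigl(F^{\Ii_g}(x)\bigr).
\]
This requires $m_g>0$. That holds almost surely when $F(a_g)>F(a_{g-1})$, and it is automatic on the event $Y\in\Ii_g$ up to null sets, because $\Prob(Y\in\Ii_g, F(a_g)=F(a_{g-1}))=0$ under continuity. I would state this caveat and otherwise treat it as routine. The endpoint cases $g=1$ and $g=G$ are handled with $a_0=-\infty$, $a_G=+\infty$ and $F(\pm\infty)\in\{0,1\}$.

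\textbf{CPIT calibration.} Having identified $\Ftilde^{\Ii_g}=T_{\Ii_g}\circ F^{\Ii_g}$ on $\Ii_g$, this condition reduces to the computation already displayed in Section~\ref{sec:tail_recal}. Specifically,
\[
\Prob\bigl(T_{\Ii_g}(F^{\Ii_g}(Y))\le u\mid Y\in\Ii_g\bigr) = \Prob\bigl(F^{\Ii_g}(Y)\le T_{\Ii_g}^{-1}(u)\mid Y\in\Ii_g\bigr) = u .
\]
The second equality uses that $T_{\Ii_g}$ is the continuous, strictly increasing conditional distribution function of the CPIT; that property follows from the continuity assumption. If strict monotonicity is doubtful, I would fall back on the generalized inverse. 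The standard fact that a random variable with continuous distribution function $T$ satisfies $T(X)\sim\Unif(0,1)$ gives the same conclusion without needing invertibility.

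\textbf{Occurrence calibration.} Here we need $\E[\Ftilde(a_g)-\Ftilde(a_{g-1})]=\Prob(Y\in\Ii_g)$. Since this difference equals $m_g$, the condition is exactly the defining property of the constants $c_1,\dots,c_G$, whose existence is guaranteed by \citet[Theorem 1]{KullFlach2015}.

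The main obstacle is the boundary bookkeeping rather than any real analysis. One point is verifying $T_{\Ii_g}(1)=1$, so that $\Ftilde$ actually attains $\sum_{j\le g}m_j$ at the right endpoint; this uses $F^{\Ii_g}(Y)<1$ almost surely on $Y\in\Ii_g$, together with continuity. The other is handling degenerate forecasts with $F(a_g)=F(a_{g-1})$. Both are dispatched by the continuity assumption on the joint distribution of $(F,Y)$ stated in Section~\ref{sec:notation}.
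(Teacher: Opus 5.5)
Your proposal is correct and follows the paper's proof step for step. You show $\Ftilde(a_g)-\Ftilde(a_{g-1})=m_g$, so that $\Ftilde^{\Ii_g}=T_{\Ii_g}\circ F^{\Ii_g}$ on $\Ii_g$. CPIT calibration then comes from the construction of $T_{\Ii_g}$, and occurrence calibration from the defining equations for the constants $c_g$.

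Your extra caveats go beyond the paper's level of rigour: you handle positivity of $m_g$, and you use the probability integral transform rather than invertibility of $T_{\Ii_g}$. One small correction: $T_{\Ii_g}(1)=1$ holds trivially because $F^{\Ii_g}\le 1$. Where continuity is really needed at the endpoints is $T_{\Ii_g}(0)=0$, for example $\Ftilde(a_g)=\sum_{j\le g}m_j$ uses $T_{\Ii_{g+1}}(0)=0$ since $a_g\in\Ii_{g+1}$. It is also needed for continuity of $T_{\Ii_g}$ at $1$, which gives the left limit at $a_g$.
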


In practice, the functions $T_{\Ii_g}$ and rescaling constants $c_g$ are estimated from the calibration data $\{(F_i, y_i)\}_{i=1}^N$. Similarly to the global recalibration of \cite{kuleshov_accurate_2018}, the functions $T_{\Ii_g}$ are estimated using isotonic regression. However, rather than being estimated using all PIT values, $T_{\Ii_g}$ is estimated using the conditional PIT values corresponding to outcomes in $\Ii^g$, that is $\{F_i^{\Ii_g}(y_i)\}$ for $\{i = 1, \dots, N : y_i \in \Ii^g\}$. This is done separately for each $g = 1, \dots, G$. The rescaling parameters $c_g$ are estimated using iterative proportional fitting (Sinkhorn's algorithm). The resulting recalibrated predictive distributions remain valid, continuous, and locally calibrated. This sample implementation is discussed in detail in Appendix \ref{app:sample}.

\begin{remark}[Choice of intervals]
    Propositions \ref{prop:proper} and \ref{prop:tail_cpit} do not make any assumptions on the intervals $\Ii_1, \dots, \Ii^G$, and outcome-conditional recalibration can therefore be applied using any partition of the outcome space. The approach can also readily be extended to use disconnected subsets of $\R$, rather than intervals. In some applications, the choice of intervals to consider can be determined from the forecasting problem at hand. For example, in the following section, we consider an application to electricity price forecasts, where we are particularly interested in predicting negative electricity prices. In other applications, it may be reasonable to use intervals that correspond to quantiles of the outcome distribution. We find that using more intervals generally yields better overall calibration, subject to sufficient data being available on which to accurately learn the recalibration functions and parameters for each interval.
\end{remark}

\begin{remark}[Global-Local recalibration]\label{remark:global_local}
    As mentioned in Section \ref{sec:cal}, probabilistic calibration generally does not imply outcome-conditional calibration, and hence the quantile recalibration approach of \cite{kuleshov_accurate_2018} does not necessary yield outcome-conditionally calibrated predictive distributions. Similarly, there is no guarantee that outcome-conditional recalibration yields predictive distributions that are probabilistically calibrated overall; this holds regardless of the choice of intervals used within the recalibration. For comparison, we additionally consider an approach that combines the two recalibration schemes by first implementing outcome-conditional recalibration, before applying quantile recalibration to these locally calibrated predictive distributions. While this does not necessarily enforce outcome-conditional calibration, the prior application of outcome-conditional recalibration may weakly enforce this in practice.  
\end{remark}

\section{Experiments}
\label{sec:exp}

We demonstrate the utility of outcome-conditional recalibration on standardised machine learning benchmark datasets from the UCI and OpenML repositories, as well as in an application to day-ahead electricity price forecasting. In both cases, the approach is compared to the global quantile recalibration approach of \cite{kuleshov_accurate_2018}, as well as a conditional variant introduced by \cite{kuleshov_calibrated_2022}. The conditional variant is designed to recalibrate predictive distributions so that they are auto-calibrated.  

\subsection{Diagnostics}\label{sec:diagnostics}

To evaluate the various predictive distributions, we quantify the degree of miscalibration using the distance of the distribution of PIT values from a standard uniform distribution. For $u_j = j/J$, for $j = 1, \dots, J$, with $J = 10$, probabilistic calibration is assessed using
\[
    \text{Cal-Err} = \sum_{j=1}^J \left( \left[\frac{1}{N} \sum_{i=1}^N \one\{F_i(y_i) \leq u_j\} \right]- u_j \right)^2.
\]
This follows \citet{kuleshov_accurate_2018} and corresponds to the squared error between the empirical distribution function of the forecast PIT values and the distribution function of a standard uniform distribution, averaged over a discrete grid. We additionally visualise this distance to uniformity graphically via a quantile-quantile plot that shows the sorted PIT values $\{F_i(y_i)\}_{i=1}^N$ plotted against reference values $1/N, 2/N, \dots, 1$. 

Outcome-conditional calibration on an interval $\Ii = [a, b)$ is similarly assessed using the same measure of calibration error applied to the empirical distribution of the conditional PIT values. That is,
\[
    \sum_{j=1}^J \left( \left[\frac{1}{N_\Ii} \sum_{\{i : y_i \in \Ii\}} \one\{F_i^\Ii(y_i) \leq u_j\}\right] - u_j \right)^2,
\]
where $N_\Ii = |\{i = 1, \dots, N: y_i \in \Ii\}|$ is the number of observations that fall in $\Ii$. We also refer to this as subset calibration error. This only assesses CPIT calibration, the first requirement in Definition \ref{def:tail_cal_pop}. The second requirement, occurrence calibration, is assessed by computing the difference between the average predicted probability that $Y \in \Ii$ and the realised occurrence frequency. That is, for each interval $\Ii$
\[
\text{Occ-Err} = \left| \frac{1}{N} \sum_{i=1}^N \left\{ \left[ F_i(b) - F_i(a) \right] - \one\{y_i \in \Ii\} \right\} \right|.
\]
For both the calibration error and the occurrence error, a value closer to zero suggests that the predictive distributions exhibit better calibration.

\subsection{UCI and OpenML experiments}\label{sec:res_bench}

\textbf{Data.} The recalibration methods are first compared on a range of datasets from the UCI and OpenML repositories with varying sizes, from $N = 768$ to $N = 45730$. A random 25\% of the data is used for testing, and the remaining 75\% to train the models and recalibration schemes, with the split stratified so that both sets have similar outcome distributions. Results are averaged over 10 random splits. For each dataset, outcome-conditional recalibration is performed using five intervals, defined using evenly spaced quantiles of the distribution of the outcomes.

\textbf{Models.} Probabilistic predictions are first obtained using Bayesian Ridge regression \citep{mackay_bayesian_1992}, as well as MC-Dropout neural networks \citep{gal_dropout_2016} with two hidden layers of 128 units each, a dropout rate of 0.5, and PReLU activation functions. %These are standard models in the probabilistic machine learning literature. 
These predictions are recalibrated using the quantile recalibration of \citet[][KFE18]{kuleshov_accurate_2018}, the covariate-dependent recalibration  of \citet[][KD22]{kuleshov_calibrated_2022}, and outcome-conditional recalibration (OC). We additionally consider the two step procedure that implements outcome-conditional recalibration followed by quantile recalibration (OC+QR). Recalibration is performed in-sample on the training data set. We also tested fitting the recalibration methods out-of-sample, reserving 30\% of the training data for recalibration (see Appendix \ref{sec:out_of_sample_uci}). 

\begin{figure}
    \centering
    \includegraphics[width=1\linewidth]{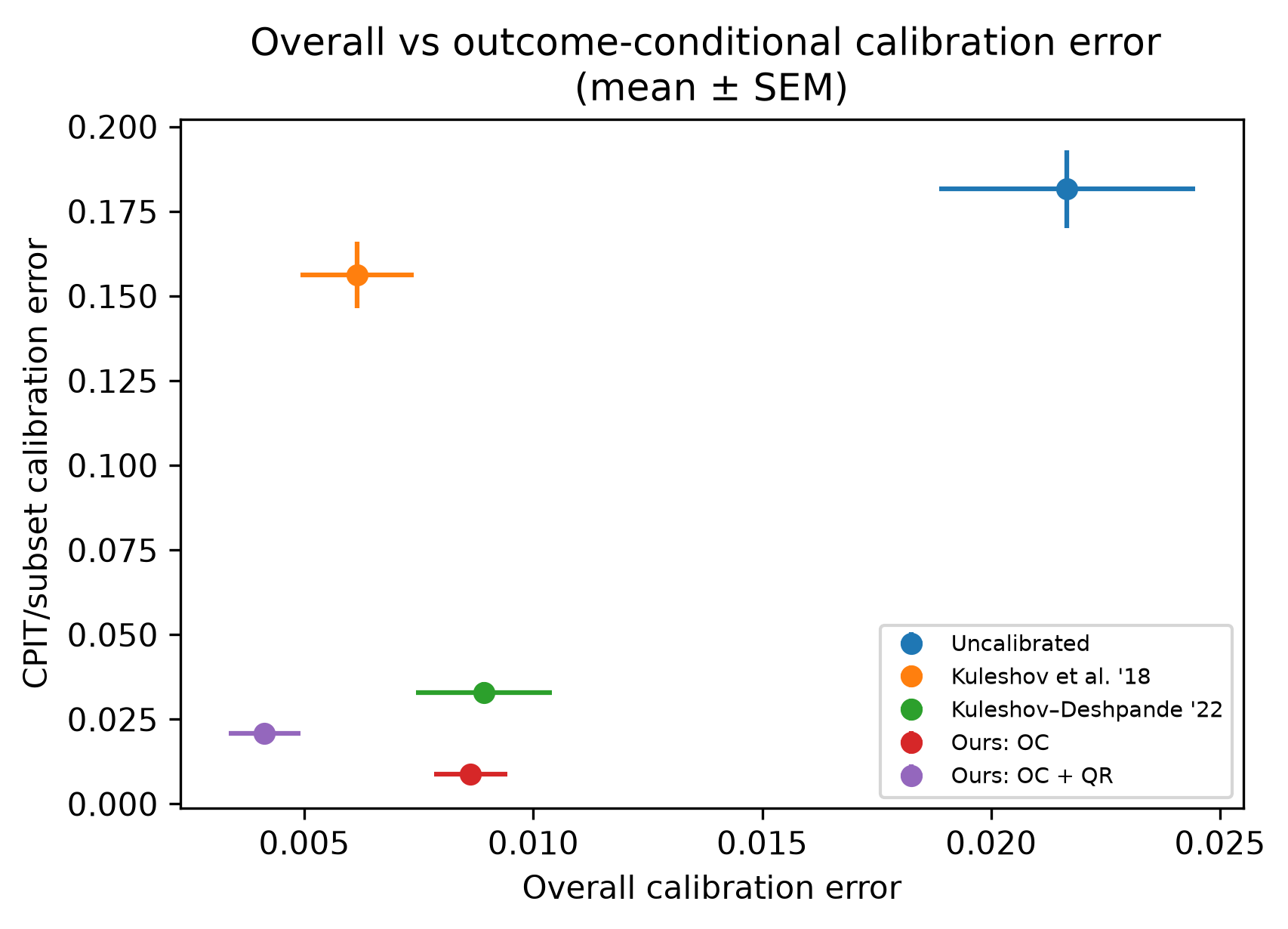}
    \caption{Outcome-conditional (CPIT/subset) vs overall calibration error averaged over all datasets and models for each of the recalibration methods. The vertical and horizontal bars indicate standard error across seeds, datasets and base methods.}
    \label{fig:summary}
\end{figure}

\textbf{Results.} Figure \ref{fig:summary} shows overall (probabilistic) and outcome-conditional (CPIT/subset) calibration error averaged over all datasets and both prediction models. Outcome-conditional calibration error is averaged across all five intervals. Table \ref{tab:subset_in_sample} in the Appendix shows outcome-conditional calibration error and occurrence calibration error, and Table \ref{tab:overall_in_sample} shows overall calibration error, for each dataset and model. 

Without recalibration, both Bayesian Ridge regression and MC-Dropout yield miscalibrated predictive distributions, which are particularly miscalibrated when attention is placed on particular outcomes. Quantile recalibration \citep{kuleshov_accurate_2018} improves overall calibration, but the resulting recalibrated predictive distributions remain severely miscalibrated within the five intervals under consideration. Covariate-conditional recalibration \citep{kuleshov_calibrated_2022} improves outcome-conditional calibration whilst marginally deteriorating overall calibration. However, as expected, outcome-conditional recalibration outperforms all other methods in terms of occurrence error and outcome-conditional calibration error; this is the case for all datasets and for both prediction models. 

Outcome-conditional recalibration leads to slightly worse overall calibration than quantile recalibration. This can be circumvented by applying quantile recalibration to the outcome-recalibrated predictive distributions; the resulting predictions exhibit the best overall calibration, without a large sacrifice in calibration when attention is restricted to particular outcomes. 

\textbf{Discussion.} The results suggest that outcome-conditional recalibration is able to successfully recalibrate predictive distributions in particular regions of the outcome space, whereas the standard quantile recalibration of \cite{kuleshov_accurate_2018} does not; this is found to hold across a range of benchmark regression datasets. The predictions are evaluated using the same partition of the outcome space on which outcome-recalibration is trained, which is the canonical approach if the partitions corresponds to subsets of interest. However, the outcome-conditional recalibrated predictive distributions also exhibit good overall calibration, and results in Appendix \ref{sec:partition_shift} demonstrate that the good outcome-conditional calibration also transfers to other partitions. The covariate-conditional recalibration of \citet{kuleshov_calibrated_2022} is competitive with outcome-conditional recalibration, though it is based on a much slower deep learning-based density fit. In contrast, outcome-recalibration is hyperparameter-free and can be applied out-of-the-box without the need for an additional validation dataset, nor complex density estimation.

\subsection{Electricity Price Forecasting}

Consider now an application to day-ahead forecasting of hourly electricity prices. We apply the recalibration methods to a state-of-the-art distributional neural network model \citep{LakshminarayananEtAl2017,RaspLerch2018}, which was proposed for electricity price forecasting by \citet{marcjasz_distributional_2023}. We focus in particular on whether the recalibration methods yield calibrated predictions for negative electricity prices, which are rare, impactful events that can occur when electricity generation exceeds electricity demand.

\textbf{Data.} We study German day-ahead electricity prices from the EPEX spot market from 2015--2020. We consider hourly electricity prices, with 24 prices per delivery day. Forecasts are issued based on a set of covariates, including day-ahead load and renewable-generation predictions; EUA carbon, API2 coal, Brent oil, and TTF gas prices; lagged electricity prices at lags of $1,2,3,7$ days; and calendar dummy variables; these are the same covariates that were used by \citet{marcjasz_distributional_2023}. The data were obtained from the ENTSO-E platform and investing.com.

\textbf{Model.} The recalibration methods are applied to a distributional deep neural network (DDNN) whose outputs are the parameters of a Johnson's $S_U$ (JSU) distribution. The JSU distribution is a flexible probability distribution that can model skewed and heavy-tailed data with support on the entire real line, which \cite{marcjasz_distributional_2023} find to be appropriate for German electricity price forecasts. The resulting predictive distribution function is $F(x)=\Phi\!\big(\gamma+\delta\,\operatorname{asinh}((x-\xi)/\lambda)\big)$, $x \in \R$, where $\Phi$ is the standard Gaussian distribution function, and $(\xi,\lambda,\delta,\gamma)$ are the estimated four parameters of the JSU distribution. Separate parameters are estimated for each hour within the same network, so that the output layer has length $4 \times 24 = 96.$

Following \cite{marcjasz_distributional_2023}, we fit a neural network with two hidden layers, which is trained using maximum likelihood estimation. An ensemble of four networks are trained with different tuned hyperparameter configurations, and the estimated JSU parameters are averaged over the four configurations. The parameters are estimated sequentially using a rolling training window: the DDNN is tuned on the first three years of data (2015--2017), before being retrained weekly on a window of the $1456$ days (roughly four years) prior to each evaluation day. The day-ahead forecasts are then evaluated over 2019 and 2020. All results are aggregated across all 24 hours of the day. 

The DDNN predictions are recalibrated using quantile recalibration, covariate-conditional recalibration, outcome-conditional recalibration (OC), as well as outcome-conditional recalibration followed by quantile recalibration (OC+QR). Like the base DDNN model, the recalibration models are also estimated adaptively using a rolling window: for each week, the recalibration parameters are estimated on a trailing $182$-day (roughly six months) window containing the DDNN's past out-of-sample predictions, and these parameters are then used to recalibrate the DDNN predictive distributions across the following week. This is designed to mirror operational practice, in which the performance of a model would be monitored sequentially over time, and the recalibration would thus also be time-adaptive.

Outcome-conditional recalibration is performed by partitioning the outcome space into a negative-price group $(-\infty,0)$ and four equal-probability quantile groups of the non-negative prices. Electricity prices are negative for $1.9\%$ of all hours between 2015--2020, which grows from $\sim\!1.44\%$ in 2015 to $\sim\!3.4\%$ in 2020. %The base model is reasonably well calibrated, but uncalibrated for negative price predictions.  

\begin{table}[t]
\vspace{-0.2cm}
\centering
\caption{Rolling (time-adaptive, 182-day) recalibration of the DDNN-JSU, German day-ahead prices, 2019--2020. CRPS in EUR/MWh; $\mathrm{cal}$ is the calibration error (lower is better); $g_0$ is the negative-price group; $\mathrm{occ}_{g_0}$ is its occurrence ratio (1 = calibrated). Best in each column in bold.\\}
\label{tab:rolling}
\setlength{\tabcolsep}{4pt}
\begin{tabular}{lcccc}
\toprule
Method & CRPS & $\mathrm{cal}_{\mathrm{all}}$ & $\mathrm{cal}_{g_0}$ & $\mathrm{occ}_{g_0}$ \\
\midrule
Raw & \textbf{2.65} & 0.0009 & 0.0611 & 0.939 \\
KFE18 & 2.67 & 0.0008 & 0.0636 & \textbf{0.973} \\
KD22 & 2.78 & 0.0006 & 0.1290 & 1.141 \\
Ours: OC & 2.70 & \textbf{0.0004} & \textbf{0.0051} & 1.050 \\
Ours: OC + QR & 2.71 & 0.0010 & 0.0082 & 1.041 \\
\bottomrule
\end{tabular}
\end{table}

\begin{figure*}[t]
    \centering
    \includegraphics[width=\linewidth]{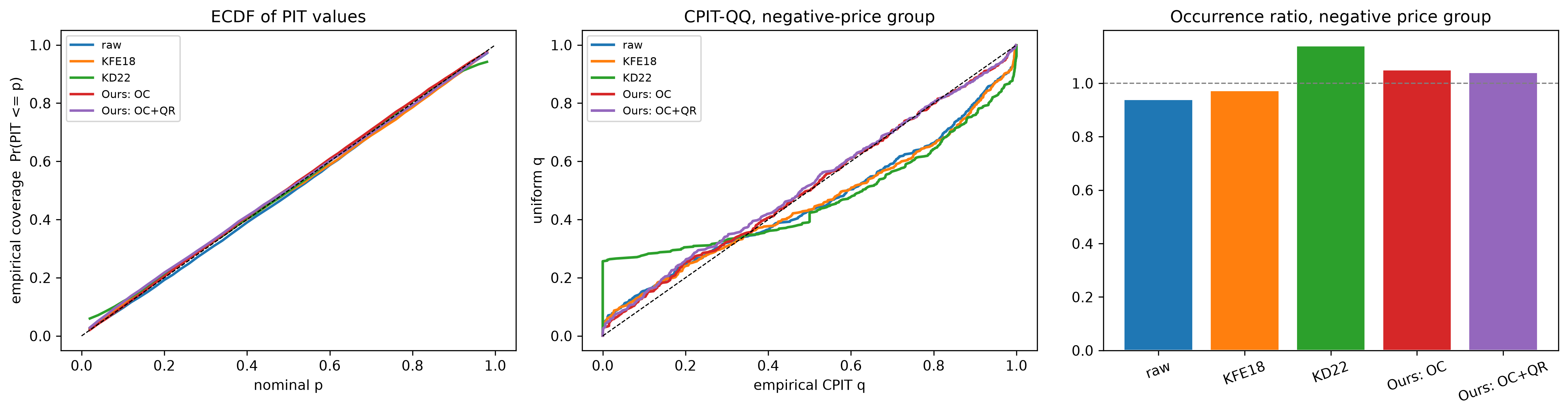}
    \caption{Rolling (182-day) recalibration of the DDNN-JSU, German day-ahead prices, 2019--2020. Left: empirical CDF of PIT values. Middle: quantile-quantile plot of CPIT values for the negative price group against reference values. Right: occurrence ratio of expected over realised events. For the left and middle panels, a calibrated forecast would lie on the 1-1 line, whereas on the right, the occurrence ratio should be close to one.}
    \label{fig:rolling}
\end{figure*}

\textbf{Results.} Table \ref{tab:rolling} displays the calibration error of all recalibration methods, as well as the raw DDNN model without recalibration. All methods yield predictive distributions that exhibit very good overall calibration; calibration error is always smaller than 0.001. The first panel of Figure \ref{fig:rolling} confirms that the empirical distribution of the PIT values is essentially uniform for all methods. 

However, the DDNN predictive distributions are not calibrated when attention is restricted to negative electricity prices, and this is not corrected by the unconditional quantile recalibration of \cite{kuleshov_accurate_2018} or the covariate-conditional recalibration of \cite{kuleshov_calibrated_2022}; this calibration error is also shown in Table \ref{tab:rolling}. In contrast, outcome-conditional recalibration yields predictive distributions that exhibit both good overall calibration, as well as good calibration when predicting negative electricity prices. 

The distribution of the conditional PIT values of all methods is shown in the second panel of Figure \ref{fig:rolling}. The CPIT values appear approximately uniform for the outcome-conditional recalibration methods, lying along the diagonal line. The shape of the distribution of CPIT values for the other methods suggests that they yield predictions that over-predict the severity of negative electricity prices, despite being calibrated on average. The final panel of Figure \ref{fig:rolling} suggests that all methods predict the occurrence of negative electricity prices with roughly the correct probability on average; the miscalibration therefore comes from the JSU distribution not accurately capturing the distribution of negative electricity prices.

Table \ref{tab:rolling} additionally contains average values of the continuous ranked probability score (CRPS), a proper scoring rule. The raw DDNN produces the most accurate predictions when assessed using the average CRPS, and all recalibration methods yield slightly higher (i.e. worse) CRPS values. This demonstrates that recalibration does not necessarily lead to improved average scores; this is the case for unconditional, covariate-conditional, and outcome-conditional recalibration. However, since the goal of probabilistic forecasting is to obtain predictions that are as sharp as possible subject to calibration, a small decrease in accuracy is often tolerable if the trade-off is improved calibration. This is especially true if calibration is improved when predicting high-impact outcomes.

\section{Discussion and conclusions}

Probabilistic predictions should be as informative as possible, subject to being calibrated. While state-of-the-art machine learning models generally do not yield calibrated predictions, several methods have been proposed to recalibrate predictions. Such recalibration methods typically only enforce a weak unconditional notion of calibration, which does not ensure that predictions are calibrated when attention is restricted to particular outcomes of interest. However, in many applications, it is essential to have calibrated probabilistic predictions for certain outcomes, since certain outcomes, such as extreme events, are often particularly important for decision making; calibrated predictions for these outcomes are therefore especially valuable.

In applications to electricity price forecasting and benchmark regression datasets, we demonstrate that state-of-the-art recalibration methods generally do not yield predictive distributions that issue calibrated forecasts for certain outcomes of interest. We therefore introduce a novel post-hoc \emph{outcome-conditional recalibration} method that ensures probabilistic predictions are calibrated when forecasting particular outcomes. The method is simple and hyperparameter-free, meaning it can be estimated almost instantaneously even for large datasets. It can be implemented in-sample on the training dataset, or (ideally) out-of-sample on an additional calibration dataset. Hence, there is large potential to implement outcome-conditional recalibration in many other application domains.

Outcome-conditional recalibration yields predictive distribution that are outcome-conditionally calibrated in-sample, on the calibration data with which they are trained. \cite{allen_-sample_2025} demonstrate that prediction methods with in-sample calibration guarantees can be used to construct sets of predictions with out-of-sample calibration guarantees, akin to conformal prediction. Connections between quantile recalibration and conformal prediction have been pointed out by \cite{DheurBenTaieb2023}, and since outcome-conditional recalibration yields forecasts that are in-sample calibrated, there is scope to \emph{conformalise} the approach to yield prediction sets that satisfy outcome-conditional calibration properties out-of-sample.

To our knowledge, outcome-conditional recalibration is the first post-hoc recalibration method that targets outcome-dependent notions of calibration. While auto-calibration (or distribution calibration) implies outcome-conditional calibration in theory, our empirical results suggest that recalibration schemes designed to enforce auto-calibration \citep[e.g.][]{kuleshov_calibrated_2022} do not necessarily yield predictive distributions that are calibrated when forecasting particular outcomes. In contrast, outcome-conditional recalibration does. Further work is therefore needed to clarify the connections between outcome-conditional calibration and auto-calibration.

\subsubsection*{Acknowledgements}
We thank Rafael Weinert for providing code and data that was used for the application to electricity price forecasts. 

\subsubsection*{Code and Data availability}
The code to run all experiments and reproduce results in this paper will be provided upon acceptance. 

\subsection*{AI Use Statement}
The authors used the generative AI models Claude Opus 4.8 and Claude Opus 5 in Claude Code to support with method implementation, as well as for editing of the manuscript. All outputs have been carefully checked and code has been tested for correctness. We take responsibility for the final content of this work, including text, claims, or artifacts produced with the aid of generative AI.

\bibliographystyle{apalike}
\bibliography{references}

\clearpage
\appendix
\thispagestyle{empty}

% Supplementary material: To improve readability, you must use a single-column format for the supplementary material.
\onecolumn
\aistatstitle{Supplementary Materials}
\
\section{Proofs}\label{app:proofs}
%We provide proofs of the propositions in the main paper. 

\begin{proof}[Proof of Proposition \ref{prop:proper}]
	The proof follows easily from the fact that $T_{\Ii_g}$ and $F^{\Ii_g}$ are continuous non-decreasing functions from zero to one, for all $g = 1, \dots, G$. That is, $T_{\Ii_g}$ is non-decreasing with $T_{\Ii_g}(0) = 0$ and $T_{\Ii_g}(1) = 1$, and $F^{\Ii_g}$ is non-decreasing with $F^{\Ii_g}(-\infty) = 0$ and $F^{\Ii_g}(+\infty) = 1$. Hence, 
	\[
	\Ftilde(-\infty) = m_0 + m_1 \left[ T_{\Ii_1} \circ F^{\Ii_g}(-\infty) \right] = m_1 T_{\Ii_1}(0) = 0
	\] 
	and 
	\[
	\Ftilde(+\infty) = \sum_{i=0}^{G-1} m_{i} + m_G \left[ T_{\Ii_G} \circ F^{\Ii_G}(+\infty) \right] = \sum_{i=0}^{G-1} m_{i} + m_G T_{\Ii_G}(1) = \sum_{i=0}^{G} m_{i} = 1.
	\]
	
	Since $T_{\Ii_g}$ and $F^{\Ii_g}$ are non-decreasing, so is $T_{\Ii_g} \circ F^{\Ii_g}$. Hence, for $x_1 \le x_2$ with $x_1, x_2 \in \Ii_g$, 
	\[
	\Ftilde(x_1) = \sum_{i=1}^{g-1} m_i + m_g \left[ T_{\Ii_g} \circ F^{\Ii_g}(x_1) \right] \le \sum_{i=1}^{g-1} m_i + m_g \left[ T_{\Ii_g} \circ F^{\Ii_g}(x_2) \right] = \Ftilde(x_2).
	\]
	If $x_1 \in \Ii_{g_1}$ and $x_2 \in \Ii_{g_2}$, for $g_1 < g_2$, then
	\[
	\Ftilde(x_1) = \sum_{i=1}^{g_1-1} m_i + m_{g_1} \left[ T_{\Ii_{g_1}} \circ F^{\Ii_{g_1}}(x_1) \right] \le \sum_{i=1}^{g_1} m_i \le \sum_{i=1}^{g_2 - 1} m_i + m_{g_2} \left[ T_{\Ii_{g_2}} \circ F^{\Ii_{g_2}}(x_2) \right] = \Ftilde(x_2).
	\]
	The recalibrated predictive distribution $\Ftilde$ is therefore non-decreasing, and bounded between zero and one. 
    
    Continuity within intervals $\Ii_g$ follows from the continuity of $T_{\Ii_g}$ and $F^{\Ii_g}$.  Let $\Ii_g = [a_{g-1}, a_g)$, at the limits one has:
    \[
    \underset{x \uparrow a_g}{\lim }\tilde F(x) = \sum_{i=1}^{g-1} m_i + m_g  = \sum_{i=1}^{g} m_i = \underset{x \downarrow a_g}{\lim }\tilde F(x),
    \]
    as $F^{\Ii_g}(a_{g-1}) = 0$, $F^{\Ii_g}(a_{g}) = 1$ and $T_{\Ii_g}(0) = 0$, $T_{\Ii_g}(1) = 1$.
\end{proof}

\vspace{0.1cm}
\begin{proof}[Proof of Proposition \ref{prop:tail_cpit}]
	For any $g = 1, \dots, G$, we have $F^{\Ii_g}(a_{g-1}) = 0$ and $F^{\Ii_g}(a_g) = 1$. Hence, for any $x \in \Ii_g$, we have
	\begin{align*}
		\tilde{F}^{\Ii_g}(x) &= \frac{\Ftilde(x) - \Ftilde(a_{g-1})}{\Ftilde(a_g) - \Ftilde(a_{g-1})}
		&= \frac{\sum_{i=0}^{g - 1} m_i + m_g \left[ T_{\Ii_g} \circ F^{\Ii_g}(x) \right] - \sum_{i=0}^{g - 1} m_i }{\sum_{i=0}^{g} m_i - \sum_{i=0}^{g - 1} m_i} 
		&= \frac{m_g \left[ T_{\Ii_g} \circ F^{\Ii_g}(x) \right] }{m_g} 
		= T_{\Ii_g} \circ F^{\Ii_g} (x).
	\end{align*}
	Conditional on $Y \in \Ii_g$, $\tilde{F}^{\Ii_g}(Y) = T_{\Ii_g} \circ F^{\Ii_g} (Y)$ follows a standard uniform distribution by construction. The recalibrated forecast distribution $\Ftilde$ therefore satisfies the first requirement of local calibration in Definition \ref{def:tail_cal_pop}, for all intervals $\Ii_1, \dots, \Ii_G$. The second requirement in Definition \ref{def:tail_cal_pop} follows since 
	\[
	\E[\Ftilde(a_g) - \Ftilde(a_{g-1})] = \E[m_g] = \E \left[ \frac{c_j [F(a_j) - F(a_{j-1})]}{ \sum_{g=1}^G c_g [F(a_g) - F(a_{g-1})]}  \right] = \Q(Y \in \Ii_j). \qedhere
	\]
\end{proof}

\section{Sample-level estimation}
\label{app:sample}

We provide a sample analysis and description of the method and implementation. 

\textbf{Setup.} Consider a partition of the outcome space $\R$ into $G$ contiguous intervals
\[
\R \;=\; \bigsqcup_{g=1}^{G} \mathcal{I}^{(g)},
\qquad
\mathcal{I}^{(g)} = [a^{(g)}, b^{(g)}),
\]
with $a^{(1)} = -\infty$ and $b^{(G)} = +\infty$. Write $N^{(g)} := |\{i : y_i \in \mathcal{I}^{(g)}\}|$ for the group sample size, $q^{(g)} := \Prob(Y \in \mathcal{I}^{(g)})$ for the occurrence probability (estimated by $N^{(g)}/N$), and $\Delta_i^{(g)} := F_i(b^{(g)}) - F_i(a^{(g)})$ for the forecast probability mass assigned to $\mathcal{I}^{(g)}$ by forecast $i$. Note that $\sum_{g=1}^{G}\Delta_i^{(g)} = 1$ for every $i$. Finally, write $F_i^{(g)}(\cdot) := [F_i(\cdot) - F_i(a^{(g)}]/\Delta_i^{(g)}$ for the CDF truncated on group $g$. 

\textbf{Intensity correction.} For each group $g = 1, \ldots, G$, compute the conditional PIT (CPIT) values
$u_i^{(g)} := F_i^{(g)}(y_i)$ for all $i$ with $y_i \in \mathcal{I}^{(g)}$,
and fit a non-decreasing map $T^{(g)} : [0,1] \to [0,1]$ via isotonic
regression on the tuples
\[
u_i^{(g)}
\;\;\longrightarrow\;\;
\frac{1}{N^{(g)}}\bigl|\bigl\{j : u_j^{(g)} \leq u_i^{(g)}\bigr\}\bigr|,
\]
with boundary conditions $T^{(g)}(0) = 0$ and $T^{(g)}(1) = 1$. These maps
correct probabilistic or quantile calibration on the group $g$.

\textbf{Occurrence correction.} To correct the probability mass assigned to each group, we introduce positive group scalings $c^{(1)}, \ldots, c^{(G)}$ and define the \emph{recalibrated group mass}
\begin{equation}
\label{eq:m}
m_i^{(g)}
\;=\;
\frac{c^{(g)}\,\Delta_i^{(g)}}
{\sum_{j=1}^{G} c^{(j)}\,\Delta_i^{(j)}}.
\end{equation}
By construction $\sum_{g=1}^{G} m_i^{(g)} = 1$ for every $i$, so the recalibrated forecast defined below is a proper CDF. The scalings are chosen so that the average recalibrated mass in each group matches the empirical occurrence rate,
\begin{equation}
\label{eq:occ_target}
\frac{1}{N}\sum_{i=1}^{N} m_i^{(g)} \;=\; q^{(g)},
\qquad g = 1, \ldots, G.
\end{equation}
Because the normalisation in eq.~\eqref{eq:m} couples all groups, eq.~\eqref{eq:occ_target} has no closed-form solution; instead $\{c^{(g)}\}$ is obtained by iterative proportional fitting (Sinkhorn's algorithm). Starting from $c^{(g)} = 1$, iterate
\begin{equation}
\label{eq:ipf}
c^{(g)} \;\leftarrow\; c^{(g)}\,\frac{q^{(g)}}{\frac{1}{N}\sum_{i=1}^{N} m_i^{(g)}},
\qquad g = 1, \ldots, G,
\end{equation}
recomputing $m_i^{(g)}$ from~\eqref{eq:m} after each sweep, until convergence. The overall scale of $\{c^{(g)}\}$ cancels in~\eqref{eq:m} and may be fixed by normalising $\sum_g c^{(g)} = 1$. Setting $c^{(g)} \equiv 1$ recovers the uncorrected case $m_i^{(g)} = \Delta_i^{(g)}$.

The recalibrated CDF is then obtained as combination of the occurrence and intensity correction:

\textbf{Recalibrated CDF.} For a new forecast--observation pair $(F_i, y_i)$, using the scalings $\{c^{(g)}\}$ learned on the training set, define the group-specific intercept
\begin{equation}
\label{eq:A}
A_i^{(g)} \;=\; \sum_{j=1}^{g-1} m_i^{(j)}.
\end{equation}
The \emph{outcome-conditional recalibrated CDF} for $y \in \mathcal{I}^{(g)}$ is
\begin{equation}
\label{eq:tail_recal}
\Ftilde_i(y)
\;=\;
A_i^{(g)} \;+\; m_i^{(g)}\,T^{(g)}\!\left(u_i^{(g)}\right),
\end{equation}
where $u_i^{(g)} = \bigl(F_i(y) - F_i(a^{(g)})\bigr) / \Delta_i^{(g)}$ is the CPIT value of the new observation. Here $A_i^{(g)}$ is the recalibrated value of $F_i(a^{(g)})$ and $m_i^{(g)}$ is the recalibrated probability mass assigned to $\mathcal{I}^{(g)}$. This corresponds to~\eqref{eq:recal_pop}. Validity, continuity and in-sample outcome-conditional calibration of the resulting CDF is established by Propositions \ref{prop:proper} and \ref{prop:tail_cpit}.

\section{Per-dataset UCI/OpenML results}\label{sec:per_dataset_uci}

The following ten UCI and OpenML datasets were used in Section \ref{sec:res_bench}: energy ($N = 768$, UCI-ML id = 242), kinematics ($N = 8192$, Delve kin8nm), power ($N = 9568$, UCI-ML id = 294), airfoil ($N = 1503$, UCI-ML id = 291), superconductivity ($N = 21263$, UCI-ML id = 464), protein ($N = 45730$, UCI-ML id = 265), ames ($N = 2930$, OpenML id = 43926), bank ($N = 8192$, OpenML bank8FM), elevators ($N = 16599$, OpenML elevators) and puma8NH ($N = 8192$, OpenML puma8NH). All experiments were run on a modern laptop (Apple M4 chip, 16Gb RAM).

In this Appendix, we present results per dataset and per model. Table \ref{tab:subset_in_sample} shows outcome-conditional (CPIT/subset) calibration error and occurrence error for each of the competing recalibration methods (uncalibrated, KFE18, KD22, outcome-conditional recalibration: OC, and outcome-conditional + quantile recalibration: OC+QR) and for each dataset and base method (Bayesian Ridge and MC Dropout). Table \ref{tab:overall_in_sample} shows overall calibration error. We find that outcome-conditional recalibration outperforms all other methods in terms of CPIT/subset calibration error and occurrence error, in all ten datasets. In terms of overall calibration, KFE18 and outcome-conditional + quantile recalibration appear best for most datasets. The OC+QR method seems to be a good intermediate that is close on subset and occurrence calibration to the outcome-conditional method, but also has good overall calibration.

% ===================== calibration-error variant: binned =====================
% Requires: \usepackage{booktabs}
\begin{table*}[h]
\centering
\scriptsize
\setlength{\tabcolsep}{2.5pt}
\caption{Outcome-conditional (CPIT/subset) calibration error and occurrence error across datasets and methods. Values report the mean $\pm$ sample standard deviation over ten seeds, scaled by $10^{2}$. Lower is better. Bold indicates the lowest mean among methods for a fixed base model, dataset, and metric.\\}
\label{tab:subset_in_sample}
{\normalsize Bayesian ridge}\par
\begin{tabular}{lrrrrrrrrrr}
\toprule
Dataset & \multicolumn{2}{c}{Uncal.} & \multicolumn{2}{c}{Kuleshov et al. '18} & \multicolumn{2}{c}{Kuleshov--Deshpande '22} & \multicolumn{2}{c}{Ours: OC} & \multicolumn{2}{c}{Ours: OC + QR} \\
\cmidrule(lr){2-3} \cmidrule(lr){4-5} \cmidrule(lr){6-7} \cmidrule(lr){8-9} \cmidrule(lr){10-11}
 & Cal-sub. $\downarrow$ & Occ-sub. $\downarrow$ & Cal-sub. $\downarrow$ & Occ-sub. $\downarrow$ & Cal-sub. $\downarrow$ & Occ-sub. $\downarrow$ & Cal-sub. $\downarrow$ & Occ-sub. $\downarrow$ & Cal-sub. $\downarrow$ & Occ-sub. $\downarrow$ \\
\midrule
airfoil & $2.60 \pm 0.47$ & $1.93 \pm 0.34$ & $3.10 \pm 0.58$ & $2.18 \pm 0.37$ & $1.89 \pm 0.69$ & $1.19 \pm 0.28$ & $\mathbf{1.39 \pm 0.35}$ & $\mathbf{0.54 \pm 0.30}$ & $1.66 \pm 0.29$ & $0.65 \pm 0.32$ \\
ames & $7.28 \pm 0.96$ & $2.56 \pm 0.24$ & $6.16 \pm 0.90$ & $2.19 \pm 0.28$ & $1.40 \pm 0.60$ & $0.62 \pm 0.21$ & $\mathbf{1.06 \pm 0.41}$ & $\mathbf{0.43 \pm 0.16}$ & $2.79 \pm 0.83$ & $1.02 \pm 0.24$ \\
bank & $32.26 \pm 1.32$ & $3.74 \pm 0.16$ & $30.32 \pm 1.58$ & $3.73 \pm 0.16$ & $1.67 \pm 1.03$ & $0.44 \pm 0.14$ & $\mathbf{0.43 \pm 0.14}$ & $\mathbf{0.27 \pm 0.09}$ & $1.31 \pm 0.46$ & $0.34 \pm 0.12$ \\
elevators & $26.56 \pm 0.56$ & $4.79 \pm 0.15$ & $19.05 \pm 0.60$ & $3.03 \pm 0.16$ & $13.82 \pm 0.92$ & $0.54 \pm 0.21$ & $\mathbf{0.20 \pm 0.06}$ & $\mathbf{0.28 \pm 0.09}$ & $0.42 \pm 0.12$ & $0.66 \pm 0.13$ \\
energy & $29.09 \pm 5.59$ & $3.79 \pm 0.46$ & $27.02 \pm 6.43$ & $3.78 \pm 0.47$ & $9.16 \pm 3.68$ & $1.22 \pm 0.49$ & $\mathbf{3.46 \pm 1.03}$ & $\mathbf{0.91 \pm 0.20}$ & $5.26 \pm 1.90$ & $1.12 \pm 0.40$ \\
kinematics & $0.34 \pm 0.07$ & $1.06 \pm 0.10$ & $1.21 \pm 0.34$ & $1.37 \pm 0.10$ & $0.34 \pm 0.06$ & $0.34 \pm 0.17$ & $\mathbf{0.25 \pm 0.08}$ & $\mathbf{0.22 \pm 0.11}$ & $0.93 \pm 0.28$ & $0.70 \pm 0.21$ \\
power & $3.26 \pm 0.49$ & $1.21 \pm 0.11$ & $2.94 \pm 0.47$ & $1.19 \pm 0.12$ & $0.45 \pm 0.12$ & $0.33 \pm 0.11$ & $\mathbf{0.32 \pm 0.07}$ & $\mathbf{0.28 \pm 0.12}$ & $0.38 \pm 0.08$ & $0.28 \pm 0.13$ \\
protein & $44.51 \pm 0.44$ & $8.98 \pm 0.05$ & $32.81 \pm 0.53$ & $7.57 \pm 0.06$ & $0.52 \pm 0.28$ & $0.41 \pm 0.19$ & $\mathbf{0.06 \pm 0.02}$ & $\mathbf{0.07 \pm 0.02}$ & $0.10 \pm 0.03$ & $0.30 \pm 0.04$ \\
puma8NH & $7.75 \pm 0.21$ & $3.77 \pm 0.09$ & $3.85 \pm 0.31$ & $3.86 \pm 0.09$ & $0.83 \pm 0.29$ & $0.85 \pm 0.23$ & $\mathbf{0.24 \pm 0.06}$ & $\mathbf{0.19 \pm 0.12}$ & $0.27 \pm 0.07$ & $0.21 \pm 0.13$ \\
supercon & $49.72 \pm 0.74$ & $6.87 \pm 0.10$ & $50.22 \pm 0.78$ & $6.85 \pm 0.10$ & $1.14 \pm 0.88$ & $0.72 \pm 0.21$ & $\mathbf{0.13 \pm 0.04}$ & $\mathbf{0.20 \pm 0.08}$ & $0.35 \pm 0.05$ & $0.82 \pm 0.11$ \\
\bottomrule
\end{tabular}

\vspace{0.7em}

{\normalsize MC dropout}\par
\begin{tabular}{lrrrrrrrrrr}
\toprule
Dataset & \multicolumn{2}{c}{Uncal.} & \multicolumn{2}{c}{Kuleshov et al.'18} & \multicolumn{2}{c}{Kuleshov--Deshpande '22} & \multicolumn{2}{c}{Ours: OC} & \multicolumn{2}{c}{Ours: OC + QR} \\
\cmidrule(lr){2-3} \cmidrule(lr){4-5} \cmidrule(lr){6-7} \cmidrule(lr){8-9} \cmidrule(lr){10-11}
 & Cal-sub. $\downarrow$ & Occ-sub. $\downarrow$ & Cal-sub. $\downarrow$ & Occ-sub. $\downarrow$ & Cal-sub. $\downarrow$ & Occ-sub. $\downarrow$ & Cal-sub. $\downarrow$ & Occ-sub. $\downarrow$ & Cal-sub. $\downarrow$ & Occ-sub. $\downarrow$ \\
\midrule
airfoil & $3.33 \pm 1.21$ & $2.46 \pm 0.67$ & $5.40 \pm 1.45$ & $3.21 \pm 0.65$ & $2.19 \pm 0.37$ & $1.07 \pm 0.32$ & $\mathbf{1.53 \pm 0.39}$ & $\mathbf{0.59 \pm 0.30}$ & $3.60 \pm 1.64$ & $1.01 \pm 0.50$ \\
ames & $8.69 \pm 8.97$ & $2.84 \pm 1.65$ & $9.84 \pm 2.63$ & $1.20 \pm 0.55$ & $6.74 \pm 1.78$ & $0.74 \pm 0.27$ & $\mathbf{2.74 \pm 1.12}$ & $\mathbf{0.46 \pm 0.19}$ & $13.16 \pm 2.66$ & $2.58 \pm 1.48$ \\
bank & $19.25 \pm 1.74$ & $1.81 \pm 0.43$ & $14.42 \pm 1.66$ & $1.71 \pm 0.34$ & $1.85 \pm 1.55$ & $0.42 \pm 0.19$ & $\mathbf{0.55 \pm 0.24}$ & $\mathbf{0.29 \pm 0.08}$ & $0.81 \pm 0.28$ & $0.31 \pm 0.08$ \\
elevators & $16.14 \pm 1.68$ & $1.72 \pm 0.86$ & $14.76 \pm 2.01$ & $1.04 \pm 0.59$ & $12.43 \pm 1.20$ & $0.48 \pm 0.16$ & $\mathbf{0.26 \pm 0.11}$ & $\mathbf{0.32 \pm 0.12}$ & $0.39 \pm 0.18$ & $0.47 \pm 0.19$ \\
energy & $19.20 \pm 3.55$ & $3.66 \pm 1.33$ & $16.10 \pm 4.21$ & $3.24 \pm 1.42$ & $8.86 \pm 2.45$ & $1.30 \pm 0.44$ & $\mathbf{3.70 \pm 1.15}$ & $\mathbf{0.81 \pm 0.17}$ & $7.06 \pm 1.92$ & $1.14 \pm 0.51$ \\
kinematics & $4.11 \pm 1.16$ & $1.35 \pm 0.29$ & $4.86 \pm 1.03$ & $1.41 \pm 0.27$ & $0.50 \pm 0.15$ & $0.37 \pm 0.16$ & $\mathbf{0.41 \pm 0.11}$ & $\mathbf{0.30 \pm 0.11}$ & $1.18 \pm 0.28$ & $0.36 \pm 0.14$ \\
power & $1.32 \pm 0.63$ & $0.86 \pm 0.25$ & $1.32 \pm 0.66$ & $0.92 \pm 0.28$ & $0.49 \pm 0.15$ & $0.33 \pm 0.12$ & $\mathbf{0.34 \pm 0.10}$ & $0.27 \pm 0.09$ & $0.45 \pm 0.08$ & $\mathbf{0.26 \pm 0.09}$ \\
protein & $43.38 \pm 0.62$ & $9.20 \pm 0.40$ & $37.33 \pm 1.33$ & $8.97 \pm 0.25$ & $0.44 \pm 0.17$ & $0.61 \pm 0.18$ & $\mathbf{0.07 \pm 0.02}$ & $\mathbf{0.12 \pm 0.06}$ & $0.23 \pm 0.07$ & $0.52 \pm 0.09$ \\
puma8NH & $5.97 \pm 0.66$ & $3.49 \pm 0.39$ & $4.50 \pm 0.26$ & $3.66 \pm 0.46$ & $0.63 \pm 0.37$ & $0.54 \pm 0.14$ & $\mathbf{0.35 \pm 0.11}$ & $\mathbf{0.27 \pm 0.15}$ & $0.58 \pm 0.29$ & $0.44 \pm 0.16$ \\
supercon & $38.35 \pm 1.96$ & $5.70 \pm 0.50$ & $27.14 \pm 2.21$ & $5.46 \pm 0.59$ & $0.51 \pm 0.15$ & $0.52 \pm 0.28$ & $\mathbf{0.14 \pm 0.04}$ & $\mathbf{0.19 \pm 0.06}$ & $0.73 \pm 0.19$ & $1.05 \pm 0.14$ \\
\bottomrule
\end{tabular}
\end{table*}

\begin{table*}[h]
\centering
\scriptsize
\setlength{\tabcolsep}{6pt}
\caption{Overall calibration error across datasets and methods. Values report the mean $\pm$ sample standard deviation over ten seeds, scaled by $10^{2}$. Lower is better. Bold indicates the lowest mean among methods for a fixed base model, dataset, and metric.\\}
\label{tab:overall_in_sample}
{\normalsize Bayesian ridge}\par
\begin{tabular}{lrrrrr}
\toprule
Dataset & Uncal. & Kuleshov et al. '18 & Kuleshov--Deshpande '22 & Ours: OC & Ours: OC + QR \\
\midrule
airfoil & $0.37 \pm 0.21$ & $\mathbf{0.31 \pm 0.13}$ & $0.55 \pm 0.45$ & $0.70 \pm 0.38$ & $0.36 \pm 0.28$ \\
ames & $3.19 \pm 0.74$ & $0.58 \pm 0.28$ & $0.55 \pm 0.41$ & $1.97 \pm 0.45$ & $\mathbf{0.42 \pm 0.22}$ \\
bank & $1.16 \pm 0.46$ & $0.15 \pm 0.19$ & $0.41 \pm 0.29$ & $1.18 \pm 0.33$ & $\mathbf{0.15 \pm 0.20}$ \\
elevators & $2.69 \pm 0.38$ & $\mathbf{0.07 \pm 0.06}$ & $0.18 \pm 0.18$ & $0.26 \pm 0.10$ & $0.07 \pm 0.05$ \\
energy & $3.39 \pm 1.58$ & $0.96 \pm 0.59$ & $2.08 \pm 1.21$ & $1.98 \pm 1.00$ & $\mathbf{0.50 \pm 0.37}$ \\
kinematics & $0.75 \pm 0.26$ & $\mathbf{0.08 \pm 0.05}$ & $0.18 \pm 0.09$ & $0.90 \pm 0.34$ & $0.08 \pm 0.05$ \\
power & $0.19 \pm 0.07$ & $\mathbf{0.09 \pm 0.06}$ & $0.13 \pm 0.09$ & $0.16 \pm 0.10$ & $0.09 \pm 0.06$ \\
protein & $1.76 \pm 0.15$ & $0.01 \pm 0.01$ & $0.03 \pm 0.02$ & $0.05 \pm 0.02$ & $\mathbf{0.01 \pm 0.01}$ \\
puma8NH & $0.26 \pm 0.10$ & $0.07 \pm 0.06$ & $0.20 \pm 0.18$ & $0.13 \pm 0.06$ & $\mathbf{0.06 \pm 0.05}$ \\
supercon & $0.56 \pm 0.15$ & $0.04 \pm 0.03$ & $0.22 \pm 0.17$ & $0.47 \pm 0.16$ & $\mathbf{0.03 \pm 0.02}$ \\
\bottomrule
\end{tabular}

\vspace{0.7em}

{\normalsize MC dropout}\par
\begin{tabular}{lrrrrr}
\toprule
Dataset & Uncal. & Kuleshov et al. '18 & Kuleshov--Deshpande '22 & Ours: OC & Ours: OC + QR \\
\midrule
airfoil & $1.97 \pm 1.05$ & $\mathbf{0.46 \pm 0.50}$ & $0.90 \pm 0.54$ & $1.69 \pm 0.46$ & $0.63 \pm 0.64$ \\
ames & $12.96 \pm 12.25$ & $8.00 \pm 1.31$ & $9.44 \pm 0.66$ & $\mathbf{3.20 \pm 2.66}$ & $4.47 \pm 2.46$ \\
bank & $1.20 \pm 0.54$ & $0.20 \pm 0.21$ & $0.58 \pm 0.64$ & $0.23 \pm 0.21$ & $\mathbf{0.20 \pm 0.20}$ \\
elevators & $1.46 \pm 0.97$ & $\mathbf{0.06 \pm 0.03}$ & $0.07 \pm 0.05$ & $0.17 \pm 0.17$ & $0.07 \pm 0.04$ \\
energy & $5.34 \pm 3.17$ & $0.76 \pm 0.48$ & $1.63 \pm 1.68$ & $2.53 \pm 1.20$ & $\mathbf{0.63 \pm 0.32}$ \\
kinematics & $0.34 \pm 0.47$ & $0.17 \pm 0.08$ & $\mathbf{0.16 \pm 0.11}$ & $0.28 \pm 0.17$ & $0.20 \pm 0.05$ \\
power & $0.43 \pm 0.45$ & $0.10 \pm 0.04$ & $0.13 \pm 0.07$ & $0.10 \pm 0.07$ & $\mathbf{0.09 \pm 0.04}$ \\
protein & $1.51 \pm 0.45$ & $\mathbf{0.02 \pm 0.02}$ & $0.11 \pm 0.10$ & $0.34 \pm 0.17$ & $0.03 \pm 0.02$ \\
puma8NH & $0.37 \pm 0.22$ & $\mathbf{0.14 \pm 0.06}$ & $0.20 \pm 0.16$ & $0.17 \pm 0.06$ & $0.14 \pm 0.05$ \\
supercon & $3.41 \pm 0.86$ & $0.04 \pm 0.02$ & $0.12 \pm 0.05$ & $0.76 \pm 0.15$ & $\mathbf{0.03 \pm 0.03}$ \\
\bottomrule
\end{tabular}
\end{table*}

\clearpage
\section{Out-of-sample fitting for the UCI/OpenML experiments}\label{sec:out_of_sample_uci}

We also tested fitting the recalibration methods out-of-sample, as \citet{kuleshov_calibrated_2022} note that their conditional recalibration method might not perform well when trained in-sample. As before, we reserve 75\% of the data for training and 25\% for testing. Of the training dataset, we use 30\% (up to 2000 data points) to fit the recalibration models, whilst using the rest to estimate the base models. We then evaluate on the remaining 25\%. Table \ref{tab:subset_out_of_sample} contains the results in terms of subset calibration error and occurrence error, Table \ref{tab:overall_out_of_sample} presents overall calibration errors. The results are similar to in-sample fitting. The outcome-conditional recalibration method performs best on almost all datasets in terms of outcome-conditional calibration; the exceptions are the \texttt{ames} dataset for the MC-Dropout model, where KD22 marginally outperforms outcome-conditional recalibration, and
the \texttt{kinematics} dataset for the Bayesian ridge model, for which the base model is already very well calibrated. In terms of overall calibration error, outcome-conditional+quantile recalibration and the quantile recalibration of \citet{kuleshov_accurate_2018} again issue the most calibrated predictive distributions for all datasets. 

% ===================== calibration-error variant: binned =====================
% Requires: \usepackage{booktabs}
\begin{table*}[h]
\centering
\scriptsize
\setlength{\tabcolsep}{2.5pt}
\caption{Outcome-conditional calibration (CPIT/subset) error and occurrence error across datasets and methods. Values report the mean $\pm$ sample standard deviation over ten seeds, scaled by $10^{2}$. Lower is better. Bold indicates the lowest mean among methods for a fixed base model, dataset, and metric.\\}
\label{tab:subset_out_of_sample}
{\normalsize Bayesian ridge}\par
\begin{tabular}{lrrrrrrrrrr}
\toprule
Dataset & \multicolumn{2}{c}{Uncal.} & \multicolumn{2}{c}{Kuleshov et al. '18} & \multicolumn{2}{c}{Kuleshov--Deshpande '22} & \multicolumn{2}{c}{Ours: OC} & \multicolumn{2}{c}{Ours: OC + QR} \\
\cmidrule(lr){2-3} \cmidrule(lr){4-5} \cmidrule(lr){6-7} \cmidrule(lr){8-9} \cmidrule(lr){10-11}
 & Cal-sub. $\downarrow$ & Occ-sub. $\downarrow$ & Cal-sub. $\downarrow$ & Occ-sub. $\downarrow$ & Cal-sub. $\downarrow$ & Occ-sub. $\downarrow$ & Cal-sub. $\downarrow$ & Occ-sub. $\downarrow$ & Cal-sub. $\downarrow$ & Occ-sub. $\downarrow$ \\
\midrule
airfoil & $2.62 \pm 0.56$ & $1.95 \pm 0.40$ & $3.20 \pm 0.98$ & $2.24 \pm 0.49$ & $3.16 \pm 1.30$ & $1.49 \pm 0.35$ & $\mathbf{2.10 \pm 0.31}$ & $\mathbf{0.75 \pm 0.37}$ & $2.29 \pm 0.51$ & $0.85 \pm 0.38$ \\
ames & $7.69 \pm 1.09$ & $2.59 \pm 0.20$ & $6.68 \pm 1.26$ & $2.37 \pm 0.24$ & $2.05 \pm 0.79$ & $0.76 \pm 0.20$ & $\mathbf{1.81 \pm 0.72}$ & $\mathbf{0.52 \pm 0.20}$ & $2.66 \pm 1.01$ & $0.84 \pm 0.31$ \\
bank & $32.19 \pm 1.34$ & $3.75 \pm 0.18$ & $30.45 \pm 1.72$ & $3.73 \pm 0.17$ & $2.93 \pm 1.84$ & $0.55 \pm 0.18$ & $\mathbf{0.69 \pm 0.33}$ & $\mathbf{0.30 \pm 0.11}$ & $1.64 \pm 0.80$ & $0.37 \pm 0.13$ \\
elevators & $26.56 \pm 0.65$ & $4.79 \pm 0.14$ & $18.77 \pm 0.74$ & $3.05 \pm 0.23$ & $14.37 \pm 0.95$ & $0.66 \pm 0.26$ & $\mathbf{0.49 \pm 0.22}$ & $\mathbf{0.41 \pm 0.16}$ & $0.83 \pm 0.38$ & $0.80 \pm 0.22$ \\
energy & $29.13 \pm 5.55$ & $3.83 \pm 0.43$ & $26.12 \pm 7.46$ & $3.82 \pm 0.47$ & $11.42 \pm 5.93$ & $1.65 \pm 0.73$ & $\mathbf{6.36 \pm 2.49}$ & $\mathbf{1.37 \pm 0.58}$ & $9.04 \pm 3.54$ & $1.81 \pm 0.77$ \\
kinematics & $\mathbf{0.33 \pm 0.07}$ & $1.06 \pm 0.14$ & $1.28 \pm 0.34$ & $1.34 \pm 0.12$ & $0.49 \pm 0.17$ & $0.51 \pm 0.22$ & $0.35 \pm 0.10$ & $\mathbf{0.27 \pm 0.14}$ & $1.04 \pm 0.35$ & $0.62 \pm 0.31$ \\
power & $3.27 \pm 0.49$ & $1.22 \pm 0.13$ & $2.99 \pm 0.56$ & $1.22 \pm 0.14$ & $0.62 \pm 0.21$ & $0.43 \pm 0.14$ & $\mathbf{0.53 \pm 0.19}$ & $\mathbf{0.30 \pm 0.14}$ & $0.58 \pm 0.16$ & $0.32 \pm 0.13$ \\
protein & $44.52 \pm 0.45$ & $8.98 \pm 0.05$ & $32.42 \pm 1.19$ & $7.78 \pm 0.20$ & $1.60 \pm 0.70$ & $0.84 \pm 0.30$ & $\mathbf{0.23 \pm 0.05}$ & $\mathbf{0.20 \pm 0.14}$ & $0.30 \pm 0.12$ & $0.47 \pm 0.14$ \\
puma8NH & $7.72 \pm 0.16$ & $3.77 \pm 0.08$ & $3.93 \pm 0.60$ & $3.83 \pm 0.15$ & $0.82 \pm 0.28$ & $0.96 \pm 0.26$ & $\mathbf{0.42 \pm 0.11}$ & $\mathbf{0.27 \pm 0.18}$ & $0.42 \pm 0.17$ & $0.28 \pm 0.15$ \\
supercon & $49.79 \pm 0.71$ & $6.88 \pm 0.10$ & $49.95 \pm 0.72$ & $6.91 \pm 0.10$ & $1.01 \pm 0.46$ & $0.78 \pm 0.39$ & $\mathbf{0.39 \pm 0.11}$ & $\mathbf{0.28 \pm 0.11}$ & $0.57 \pm 0.10$ & $0.83 \pm 0.12$ \\
\bottomrule
\end{tabular}

\vspace{0.7em}

{\normalsize MC dropout}\par
\begin{tabular}{lrrrrrrrrrr}
\toprule
Dataset & \multicolumn{2}{c}{Uncal.} & \multicolumn{2}{c}{Kuleshov et al. '18} & \multicolumn{2}{c}{Kuleshov--Deshpande '22} & \multicolumn{2}{c}{Ours: OC} & \multicolumn{2}{c}{Ours: OC + QR} \\
\cmidrule(lr){2-3} \cmidrule(lr){4-5} \cmidrule(lr){6-7} \cmidrule(lr){8-9} \cmidrule(lr){10-11}
 & Cal-sub. $\downarrow$ & Occ-sub. $\downarrow$ & Cal-sub. $\downarrow$ & Occ-sub. $\downarrow$ & Cal-sub. $\downarrow$ & Occ-sub. $\downarrow$ & Cal-sub. $\downarrow$ & Occ-sub. $\downarrow$ & Cal-sub. $\downarrow$ & Occ-sub. $\downarrow$ \\
\midrule
airfoil & $4.70 \pm 2.26$ & $3.32 \pm 0.82$ & $4.71 \pm 1.73$ & $3.40 \pm 0.68$ & $3.15 \pm 1.43$ & $1.51 \pm 0.56$ & $\mathbf{2.54 \pm 0.82}$ & $\mathbf{0.92 \pm 0.37}$ & $2.94 \pm 0.91$ & $1.11 \pm 0.41$ \\
ames & $5.08 \pm 5.81$ & $2.11 \pm 0.95$ & $2.71 \pm 1.30$ & $1.42 \pm 0.28$ & $\mathbf{1.55 \pm 0.74}$ & $0.68 \pm 0.23$ & $1.74 \pm 0.46$ & $\mathbf{0.47 \pm 0.20}$ & $4.22 \pm 2.21$ & $1.24 \pm 0.79$ \\
bank & $19.13 \pm 2.19$ & $2.33 \pm 0.47$ & $15.18 \pm 1.91$ & $2.02 \pm 0.42$ & $2.31 \pm 2.59$ & $0.58 \pm 0.27$ & $\mathbf{0.79 \pm 0.34}$ & $\mathbf{0.34 \pm 0.14}$ & $1.00 \pm 0.43$ & $0.35 \pm 0.12$ \\
elevators & $16.06 \pm 1.38$ & $1.56 \pm 0.70$ & $14.17 \pm 2.23$ & $0.93 \pm 0.46$ & $12.36 \pm 0.86$ & $0.56 \pm 0.23$ & $\mathbf{0.43 \pm 0.13}$ & $\mathbf{0.45 \pm 0.15}$ & $0.57 \pm 0.13$ & $0.56 \pm 0.20$ \\
energy & $17.56 \pm 5.23$ & $2.98 \pm 0.77$ & $15.28 \pm 5.25$ & $2.69 \pm 0.83$ & $10.73 \pm 3.87$ & $1.55 \pm 0.52$ & $\mathbf{6.45 \pm 1.96}$ & $\mathbf{1.33 \pm 0.53}$ & $8.67 \pm 2.24$ & $1.61 \pm 0.72$ \\
kinematics & $4.01 \pm 1.05$ & $1.37 \pm 0.36$ & $3.72 \pm 0.89$ & $1.37 \pm 0.37$ & $0.57 \pm 0.26$ & $0.52 \pm 0.22$ & $\mathbf{0.52 \pm 0.17}$ & $\mathbf{0.35 \pm 0.12}$ & $0.79 \pm 0.35$ & $0.38 \pm 0.14$ \\
power & $1.16 \pm 0.54$ & $0.90 \pm 0.46$ & $0.85 \pm 0.61$ & $0.88 \pm 0.45$ & $0.65 \pm 0.22$ & $0.38 \pm 0.11$ & $\mathbf{0.54 \pm 0.14}$ & $0.30 \pm 0.15$ & $0.65 \pm 0.22$ & $\mathbf{0.29 \pm 0.14}$ \\
protein & $43.20 \pm 0.60$ & $9.38 \pm 0.51$ & $36.63 \pm 2.07$ & $9.09 \pm 0.40$ & $1.06 \pm 0.81$ & $0.87 \pm 0.25$ & $\mathbf{0.28 \pm 0.08}$ & $\mathbf{0.25 \pm 0.14}$ & $0.46 \pm 0.25$ & $0.66 \pm 0.22$ \\
puma8NH & $6.10 \pm 0.74$ & $3.25 \pm 0.27$ & $4.77 \pm 0.48$ & $3.34 \pm 0.28$ & $0.59 \pm 0.35$ & $0.71 \pm 0.33$ & $\mathbf{0.44 \pm 0.17}$ & $\mathbf{0.36 \pm 0.18}$ & $0.53 \pm 0.22$ & $0.45 \pm 0.16$ \\
supercon & $40.22 \pm 2.64$ & $5.58 \pm 0.70$ & $29.06 \pm 2.56$ & $5.29 \pm 0.95$ & $0.98 \pm 0.46$ & $0.57 \pm 0.23$ & $\mathbf{0.43 \pm 0.15}$ & $\mathbf{0.25 \pm 0.06}$ & $0.87 \pm 0.26$ & $0.95 \pm 0.19$ \\
\bottomrule
\end{tabular}
\end{table*}

\begin{table*}[h]
\centering
\scriptsize
\setlength{\tabcolsep}{6pt}
\caption{Overall calibration error across datasets and methods. Values report the mean $\pm$ sample standard deviation over ten seeds, scaled by $10^{2}$. Lower is better. Bold indicates the lowest mean among methods for a fixed base model, dataset, and metric.\\}
\label{tab:overall_out_of_sample}
{\normalsize Bayesian ridge}\par
\begin{tabular}{lrrrrr}
\toprule
Dataset & Uncal. & Kuleshov et al. '18 & Kuleshov--Deshpande '22 & Ours: OC & Ours: OC + QR \\
\midrule
airfoil & $\mathbf{0.37 \pm 0.23}$ & $0.51 \pm 0.38$ & $1.30 \pm 0.70$ & $0.95 \pm 0.66$ & $0.60 \pm 0.62$ \\
ames & $2.74 \pm 1.02$ & $\mathbf{0.49 \pm 0.31}$ & $0.66 \pm 0.99$ & $1.80 \pm 0.47$ & $0.50 \pm 0.39$ \\
bank & $1.14 \pm 0.45$ & $\mathbf{0.16 \pm 0.19}$ & $0.58 \pm 0.36$ & $1.17 \pm 0.42$ & $0.21 \pm 0.28$ \\
elevators & $2.71 \pm 0.37$ & $0.11 \pm 0.10$ & $0.14 \pm 0.10$ & $0.29 \pm 0.11$ & $\mathbf{0.11 \pm 0.10}$ \\
energy & $3.39 \pm 1.66$ & $1.45 \pm 1.56$ & $2.84 \pm 1.40$ & $2.71 \pm 1.40$ & $\mathbf{1.33 \pm 0.99}$ \\
kinematics & $0.71 \pm 0.26$ & $\mathbf{0.14 \pm 0.10}$ & $0.20 \pm 0.14$ & $1.03 \pm 0.40$ & $0.16 \pm 0.09$ \\
power & $0.18 \pm 0.05$ & $\mathbf{0.16 \pm 0.11}$ & $0.19 \pm 0.08$ & $0.22 \pm 0.19$ & $0.17 \pm 0.12$ \\
protein & $1.75 \pm 0.15$ & $\mathbf{0.06 \pm 0.05}$ & $0.25 \pm 0.23$ & $0.07 \pm 0.03$ & $0.07 \pm 0.05$ \\
puma8NH & $0.27 \pm 0.11$ & $\mathbf{0.10 \pm 0.04}$ & $0.29 \pm 0.19$ & $0.15 \pm 0.11$ & $0.11 \pm 0.06$ \\
supercon & $0.57 \pm 0.16$ & $0.11 \pm 0.07$ & $0.21 \pm 0.13$ & $0.52 \pm 0.24$ & $\mathbf{0.09 \pm 0.06}$ \\
\bottomrule
\end{tabular}

\vspace{0.7em}

{\normalsize MC dropout}\par
\begin{tabular}{lrrrrr}
\toprule
Dataset & Uncal. & Kuleshov et al. '18 & Kuleshov--Deshpande '22 & Ours: OC & Ours: OC + QR \\
\midrule
airfoil & $1.04 \pm 0.59$ & $\mathbf{0.62 \pm 0.46}$ & $1.86 \pm 1.43$ & $1.72 \pm 0.66$ & $0.67 \pm 0.43$ \\
ames & $6.87 \pm 7.03$ & $0.41 \pm 0.29$ & $0.50 \pm 0.52$ & $3.41 \pm 2.17$ & $\mathbf{0.39 \pm 0.18}$ \\
bank & $2.05 \pm 1.66$ & $\mathbf{0.20 \pm 0.20}$ & $0.57 \pm 0.55$ & $0.34 \pm 0.29$ & $0.20 \pm 0.23$ \\
elevators & $0.94 \pm 0.55$ & $0.08 \pm 0.04$ & $0.15 \pm 0.12$ & $0.16 \pm 0.10$ & $\mathbf{0.08 \pm 0.04}$ \\
energy & $4.64 \pm 2.58$ & $\mathbf{1.29 \pm 0.78}$ & $2.32 \pm 1.27$ & $2.32 \pm 1.09$ & $1.45 \pm 0.80$ \\
kinematics & $0.24 \pm 0.13$ & $\mathbf{0.14 \pm 0.09}$ & $0.24 \pm 0.18$ & $0.50 \pm 0.23$ & $0.14 \pm 0.09$ \\
power & $0.72 \pm 0.41$ & $\mathbf{0.13 \pm 0.08}$ & $0.24 \pm 0.10$ & $0.21 \pm 0.15$ & $0.14 \pm 0.06$ \\
protein & $1.82 \pm 0.64$ & $\mathbf{0.06 \pm 0.05}$ & $0.38 \pm 0.35$ & $0.35 \pm 0.18$ & $0.08 \pm 0.07$ \\
puma8NH & $0.33 \pm 0.18$ & $\mathbf{0.16 \pm 0.18}$ & $0.30 \pm 0.32$ & $0.25 \pm 0.17$ & $0.16 \pm 0.17$ \\
supercon & $4.17 \pm 1.59$ & $0.08 \pm 0.06$ & $0.32 \pm 0.27$ & $0.83 \pm 0.17$ & $\mathbf{0.07 \pm 0.04}$ \\
\bottomrule
\end{tabular}
\end{table*}

\clearpage
\section{Evaluating on partitions not used for fitting} \label{sec:partition_shift}

In the results in the main paper the, evaluation of outcome-conditional calibration was always performed with respect to the partitions that were used for recalibration. In this Appendix, we evaluate to what extent outcome-conditional recalibration transfers across partitions. We assess how a method fitted on $G$ partitions transfers when evaluated on finer partitions, and also assess performance with different combinations of partitions.

Table \ref{tab:transfer} contains the calibration results for different implementations of outcome-conditional recalibration. When recalibrating for a fixed number of partitions, e.g. $G = 5$, the calibration error deteriorates when evaluated on a finer partition, though outcome-conditional recalibration still outperforms all baselines. In contrast, training using a finer partition of the outcome space also performs well on coarser partitions. When interest is on calibration for many different partitions of the outcome space, an iterated recalibration method performs well; this approach is described in the following subsection. However, this is not much more beneficial than the more parsimonious outcome-conditional recalibration approach covered in the main text.

\begin{table}[htbp]
\centering\scriptsize
\setlength{\tabcolsep}{4pt}
\caption{Evaluation of recalibration schemes when calibration is assessed on different partitions to those used for training the outcome-conditional recalibration scheme. The recalibration is trained using $G = 5, 16, 20$ partitions, all defined using evenly spaced quantiles of the outcome distribution. Calibration error is then calculated for the following partitions: ``Cal-all'' is the standard (partition-free) calibration error; $G = 5, 10, 20$ evenly spaced partitions; $G=5$ shift sets quantile breaks at $[0.1, 0.3, 0.5, 0.7]$; \texttt{union16} pairs adjacent atoms of the $16$-cell partition; \texttt{tail} is a partition at quantile breakpoints $0.5$, $0.9$, $0.95$ and $0.99$ quantiles; \texttt{rand} averages three partitions with 4 break levels drawn uniformly per seed; \emph{mean} averages the seven partition columns. Results are aggregated over all datasets with five random seeds, with Bayesian ridge as the base model. All calibration errors have been scaled by $10^3$ to aid readability.\\
 }
\label{tab:transfer}
\begin{tabular}{lrrrrrrrrr}
\toprule
Method                              & Cal-all & $G{=}5$ & \texttt{union16} & $G{=}5$ shift & $G{=}10$ & $G{=}20$ & \texttt{tail} & \texttt{rand} & mean \\
\midrule
Uncalibrated                        & $11.8$ & $198.4$ & $178.4$ & $235.1$ & $158.9$ & $133.8$ & $252.4$ & $217.3$ & $196.3$ \\
KFE18                               & $2.2$ & $177.7$ & $163.9$ & $218.0$ & $147.5$ & $124.5$ & $225.9$ & $195.5$ & $179.0$ \\
KD22                                & $4.9$ & $20.7$ & $37.8$ & $36.6$ & $44.7$ & $71.7$ & $\mathbf{118.7}$ & $48.0$ & $54.0$ \\
Ours: OC $G{=}5$              & $7.9$ & $\mathbf{8.7}$ & $32.0$ & $31.3$ & $41.5$ & $80.2$ & $154.7$ & $42.3$ & $55.8$ \\
Ours: OC, $G{=}16$             & $10.1$ & $13.2$ & $\mathbf{19.0}$ & $\mathbf{15.4}$ & $\mathbf{26.3}$ & $56.0$ & $132.8$ & $37.9$ & $\mathbf{42.9}$ \\
Ours: OC, $G{=}20$             & $10.0$ & $13.2$ & $20.2$ & $16.2$ & $26.9$ & $\mathbf{54.1}$ & $132.7$ & $40.3$ & $43.4$ \\
Ours: OC + QR, $G{=}5$            & $1.6$ & $15.0$ & $27.6$ & $36.3$ & $44.4$ & $83.7$ & $188.4$ & $48.4$ & $63.4$ \\
Ours: OC + QR, $G{=}16$           & $\mathbf{1.6}$ & $14.7$ & $19.9$ & $20.4$ & $29.6$ & $56.7$ & $148.7$ & $39.2$ & $47.0$ \\
Ours: OC + QR, $G{=}20$           & $1.7$ & $15.3$ & $20.4$ & $20.3$ & $30.2$ & $56.0$ & $135.7$ & $42.0$ & $45.7$ \\
Iterated recalibration, $10$ rounds & $4.9$ & $13.5$ & $19.9$ & $20.6$ & $31.1$ & $56.7$ & $122.7$ & $\mathbf{37.1}$ & $43.1$ \\
\bottomrule
\end{tabular}
\end{table}

\subsection{Iterated recalibration}

Any single application of outcome-conditional recalibration is in-sample calibrated on only one family of subsets (the family on which it is trained). Inspired by the multicalibration strategy from \citet{hebert-johnson_multicalibration_2018}, we now introduce a strategy to recalibrate predictive distributions on many subsets of the outcome space. This approach applies outcome-conditional recalibration iteratively on different partitions, and composes the resulting transformations. 

\textbf{Setup.} Consider $n_b$ break points
$b_0 = -\infty < b_1 < \cdots < b_{n_b-1} = +\infty$, possibly at
equally spaced quantile levels of the outcome space, then define the set of all subsets defined by these breakpoints,
\[
    \mathcal{S} \;=\; \bigl\{[b_j, b_k) : 0 \le j < k \le n_b-1\bigr\},
    \qquad |\mathcal{S}| = \tbinom{n_b}{2}.
\]
For example, $n_b = 17$ yields $|\mathcal{S}| = 136$ subsets, including $\R$ itself. 

\textbf{Method.} Outcome-conditional recalibration is applied in rounds. In each round, it finds the subset in $\mathcal{S}$ with the largest calibration error. It then recalibrates this subset, whilst leaving the predictive distribution on the complement of this subset untouched (up to a rescaling to ensure a valid distribution function). This is then repeated for $R$ rounds, after which the resulting predictive distribution is a composition of the one-subset outcome-conditional recalibrations. The final transformation $G$ produces the iterated recalibrated prediction $\Ftilde_i(y) = G_i(F_i(y))$. Each round of the algorithm is described in more detail below:

\paragraph{For rounds $1, \dots,R,$}
\begin{enumerate}
\item Find the subset $S = [b_j, b_k)\in \mathcal{S}$ with the largest subset calibration error (calibration error over $1.65/n_S$). 
\item Compute current break point probabilities: $A_i = G_i(F_i(b_j))$, $B_i = G_i(F_i(b_k))$, $\Delta_i = B_i - A_i$ as well as CPIT-values: $v_i = (G_i(F_i(y_i)) - A_i)/\Delta_i$ for the observations $y_i \in S$. For round 1, $G_i$ is the identity function.
\item Apply outcome-conditional recalibration on $S = [b_j, b_k)$: %and $S^C = (-\infty, b_j] \cup (b_k, \infty)$, whilst setting $T^{S^C} = \text{id}.$:
\begin{itemize}
    \item \emph{Shape}: fit $T^S$ using isotonic regression of $\{v_i\}$ on its own
      normalised ranks (see Appendix~\ref{app:sample}).
      \item \emph{Mass}: writing $c = \log(c^S/c^{S^c})$, the recalibrated mass assigned to $S$ simplifies to,
      \[
        m_i =\frac{c^S \Delta_i}{c^S \Delta_i + c^{S^c}(1 - \Delta_i)} = \mathrm{logit}^{-1}\bigl(\mathrm{logit}(\Delta_i) + c\bigr).
      \]
      The recalibration equation $\sum_i \mathrm{logit}^{-1}(\mathrm{logit}(\Delta_i) + c)/N = q_S$ can be solved by bisection. 
\end{itemize}
\item Update $G_i$ as: 
      \[
        G_i(u) \leftarrow
        \begin{cases}
          s_i G_i(u), & u \le F_i(b_j),\\[2pt]
          A_i' + m_i\,T^S\bigl((G_i(u)-A_i)/\Delta_i\bigr), & F_i(b_j)<u<F_i(b_k),\\[2pt]
          1 - s_i\bigl(1-G_i(u)\bigr), & u \ge F_i(b_k),
        \end{cases}
      \]
      where $s_i = (1 - m_i)/(1 - \Delta_i)$ and $A_i' = s_i A_i$. This is~\eqref{eq:tail_recal} for the partition $\{S, S^c\}$ with $T^{S^c} = \mathrm{id}$. 
\end{enumerate}
We find that the number of rounds is usually of minor importance, and running for a fixed 10 rounds yields good performance in the application above. As single-step outcome-conditional recalibration is extremely fast, this is very feasible computationally. It is also possible to perform early stopping on a held-out dataset, but we find that this does not increase performance much.

\end{document}